\documentclass{article} 
\usepackage{iclr2027_conference,times}

\usepackage{amsmath,amsfonts,bm}

\def\eqref#1{equation~\ref{#1}}

\def\1{\bm{1}}

\DeclareMathAlphabet{\mathsfit}{\encodingdefault}{\sfdefault}{m}{sl}
\SetMathAlphabet{\mathsfit}{bold}{\encodingdefault}{\sfdefault}{bx}{n}

\DeclareMathOperator*{\argmin}{arg\,min}

\usepackage{hyperref}
\usepackage{url}
\usepackage{amsthm}
\newtheorem{theorem}{Theorem}
\newtheorem{lemma}[theorem]{Lemma}

\usepackage{graphicx}
\usepackage{multirow}
\usepackage{amsmath,amssymb,amsfonts}
\usepackage{mathrsfs}
\usepackage[title]{appendix}
\usepackage{xcolor}
\usepackage{textcomp}
\usepackage{manyfoot}
\usepackage{booktabs}
\usepackage{algorithm}
\usepackage{algorithmicx}
\usepackage{algpseudocode}
\usepackage{listings}

\usepackage{mathtools}
\usepackage[section]{placeins}  
\usepackage{xr}
\usepackage{wrapfig}
\usepackage{enumitem}
 \usepackage{url}

\title{Higher-order pruning of experts in mixture-of-experts language models}

\author{Alex M. Tseng, Prannay Kaul, Luca Zancato, Wei Xia \& Stefano Soatto\\
AWS Agentic AI\\
\texttt{\{amtseng,prannayk,wxia,soattos\}@amazon.com}; \texttt{zancato@amazon.it}
}

\iclrfinalcopy 
\begin{document}

\maketitle

\begin{abstract}
Mixture-of-Experts (MoE) language models suffer from large parameter counts, which create a significant memory bottleneck. Expert pruning is the most direct approach for reducing this parameter count, yet existing methods make pruning decisions for each expert independently, and assume experts' contributions are purely additive. In reality, expert usage in MoEs is inherently cooperative. We derive HOPE\footnote{Code is available at \url{https://github.com/awslabs/HOPE/}} (Higher-Order Pruning of Experts), a second-order pruning objective which provably minimizes an upper bound on the error resulting from pruning. We show that REAP (a state-of-the-art first-order pruning method) is a special case of HOPE where interaction terms are ignored. Across three frontier MoE models (up to 122B parameters), two distinct calibration sets, and multiple benchmarks (including math, instruction following, coding, and an agentic suite), we demonstrate that HOPE produces better pruning decisions than existing methods, and its advantage is most pronounced at high pruning rates and on challenging agentic workloads. At 50\% pruning, HOPE outperforms all baselines and achieves an average rank of 1.58 out of 5 methods (versus 2.42 for the next-best method, REAP), with gains of up to +6.1\% on agentic coding. Over all conditions, HOPE again achieves the best average rank and surpasses every other method in the majority of head-to-head comparisons. By preserving cooperative expert structure that first-order methods ignore, HOPE enables aggressive compression with minimal degradation, particularly on complex tasks where diverse expert combinations are invoked over long sequences.
\end{abstract}


\section{Introduction}

Mixture-of-expert (MoE) language models have rapidly become the dominant architecture for state-of-the-art LLMs, such as DeepSeek-V3, Qwen3.5, and GLM-4.5 \citep{deepseekv3,qwen3,glm4}. MoEs attempt to scale model capacity while reducing inference costs by multiplexing the feed-forward MLP in each layer into many ``expert" MLPs. Each token is independently routed to a small subset of these specialized experts in each layer, thereby reducing the inference time for each token while still allowing the model to retain high capacity and capability.

Importantly, however, although each token only activates a small subset of experts in each layer, the entire expert pool must reside in GPU memory. That is, the high parameter count of MoEs competes directly with the KV cache for valuable GPU capacity, thus limiting their practical deployment, including: 1) the need for expensive, high-memory hardware; 2) limited concurrent requests to a model; 3) reduced memory budget for context, tool calls, and reasoning.

The vast majority of the parameter count (and thus memory footprint) of MoEs is precisely the experts themselves (116 billion of the 122 billion parameters in Qwen3.5-122B-A10B are expert MLPs, taking $\sim240$ GB). While quantization reduces the size of each parameter, it cannot eliminate redundant experts, and all experts still must reside in memory. \textit{Expert pruning} takes an orthogonal approach: it permanently removes entire experts, freeing memory regardless of precision, and it composes with quantization \citep{reap2025,chen2025collaborative}. Crucially, pruning is one-shot and requires no retraining. Pruning Qwen3.5-122B by 50\% would immediately reduce its footprint to 125 GB, freeing 115 GB for additional context and tool calls, longer reasoning traces, and deeper searches. This substantially expands the model's effective capabilities on the same hardware.

Current expert-pruning methods have already achieved promising results, but they are limited in their independent treatment of experts. These methods are \textit{first order}, meaning they assign each expert a scalar importance score in isolation, ignoring interactions between experts. In reality, MoE inference is inherently combinatorial: each token activates a group of $K$ experts whose outputs are jointly aggregated, and certain pairs of experts are co-selected far more often than independent chance would predict (Figure~\ref{fig:S2pmi}). First-order methods have a major failure mode arising from this cooperative structure. If an expert $A$ is pruned, cooperative circuits it participates in are disrupted. Partner experts that are frequently co-selected with $A$ may no longer combine coherently, potentially degrading performance beyond what removing $A$ alone would suggest. At aggressive pruning rates (40--50\%), many such cooperative pairs are broken simultaneously, compounding damage in ways that first-order importance scores cannot predict or prevent. Higher pruning rates are also the most impactful for deployment: halving the expert pool immediately frees enough memory for longer contexts and extended reasoning, or to accommodate substantially more concurrent requests.

To this end, we derive HOPE (\textbf{H}igher \textbf{O}rder \textbf{P}runing of \textbf{E}xperts), a second-order pruning objective which considers pairwise interactions between experts to inform pruning decisions. During calibration, we record expert usage, including interaction terms that reflect how pairs of experts complement each other, resulting in a square interaction matrix per layer which is solved as a quadratic program with minimal computational overhead. We show that REAP, a state-of-the-art first-order method, is a special case of HOPE where interaction terms are ignored. Across extensive experiments (3 MoE architectures up to 122B parameters, 2 calibration sets, 6 pruning rates, and multiple benchmarks), HOPE achieved the best performance most frequently and beat every state-of-the-art baseline in the majority of head-to-head comparisons. As predicted by our theory, HOPE's margin over competing methods grows with pruning rate (particularly at 40--50\%), with its advantage most pronounced in complex agentic tasks which rely on diverse expert combinations. Our analysis also confirms that HOPE's advantage stems from selectively retaining experts with stronger cooperative structure---exactly the signal that first-order methods miss.

\section{Related Work}

\subsection{Mixture-of-experts}

In a Transformer-based MoE, the MLP in each layer is replaced by $E$ expert MLPs and a learned router. For a layer with $E$ experts, the output is $h(x) = \sum_k g_{k}(x) f_{k}(x)$, where $f_{k}(x) \in \mathbb{R}^{d}$ is the output of expert $k$ and $g_{k}(x) \in [0,1)$ is its gate weight. The layer selects the top-$K$ experts with the highest router logits (forming the selected set $T(x)$), and these logits are softmax-normalized into gate weights: $g_{k}(x) > 0$ iff $k \in T(x)$, with $\sum_{k=1}^{E} g_{k}(x) = 1$. This allows high total parameter count while activating far fewer parameters per token. Standard training of MoEs includes a load-balancing loss \citep{shazeer2017} to encourage uniform expert usage in each layer, although empirically, utilization remains highly non-uniform across experts (and this enables expert pruning).

\subsection{Expert pruning}

Expert pruning permanently removes experts to reduce the memory footprint of MoEs \textit{without} retraining. Existing methods follow a common recipe: pass a calibration set $\mathcal{D}$ through the model, collect per-expert usage statistics in each layer, and assign each expert $k$ a scalar importance score $S_{k}$. A fixed fraction of experts per layer is then pruned according to the lowest $S_{k}$. These methods differ in the statistics which are collected and how $S_{k}$ is defined.

\textbf{Frequency}: prune the least-activated experts. Rank experts by $S_{k}^{\text{freq}} = \vert\mathcal{X}_{k}\vert$, where $\mathcal{X}_{k}$ is the set of tokens which activate expert $k$. This method is simple, but ignores expert outputs entirely.

\textbf{EAN} (Expert Activation Norm) \citep{jaiswal2025}: prune by the sum of expert output norms over active tokens. Rank experts by $S_{k}^{\text{EAN}} = \sum_{x\in \mathcal{X}_{k}}\Vert f_{k}(x)\Vert_{2}$.

\textbf{REAP} (Router-weighted Expert Activation Pruning) \citep{reap2025}: prune by gate-weighted activation norm, averaged over active tokens. Rank experts by $S_{k}^{\text{REAP}} = \frac{1}{\vert \mathcal{X}_{k}\vert}\sum_{x\in\mathcal{X}_{k}}g_{k}(x)\Vert f_{k}(x)\Vert_{2}$. This is generally considered the state-of-the-art pruning method for generative tasks, and is backed by theoretical justification.

\textbf{MAN} (Mean Activation Norm) \citep{liu2025man}: prune by activation norm, averaged over active tokens. Rank experts by $S_{k}^{\text{MAN}} = \frac{1}{\vert \mathcal{X}_{k}\vert}\sum_{x\in\mathcal{X}_{k}}\Vert f_{k}(x)\Vert_{2}$. This is very similar to REAP, but without the gate weighting. \citet{liu2025man} also placed existing first-order pruning methods into a common framework parameterized by routing frequency, gate weighting, and activation strength.

All of these pruning methods are first-order: they score each expert independently using a scalar importance metric $S_{k}$, ignoring expert interactions.

Note that alternatives to pruning---namely expert \textit{merging}---have also been proposed to reduce MoE parameter count, but \citet{reap2025} showed both theoretically and empirically that expert merging leads to irreducible error and underperforms expert pruning. Intuitively, pruning (as opposed to merging) preserves the router's ability to independently modulate surviving experts.

\subsection{Expert interaction and cooperation}

Non-pruning work has shown that expert co-selection patterns are stable and exploitable properties of MoEs. Flame-MoE \citep{flamemoe2025} showed that expert co-selection patterns emerge early in training and stabilize throughout. \citet{symphonysmoe2025} augmented the MoE router with an expert co-selection graph that encourages repeated activation of cooperative expert pairs, improving robustness. These results establish that pairs of experts \textit{cooperate}---an exploitable property---but no prior work has leveraged this cooperative structure to inform pruning decisions.

\section{HOPE: Higher-Order Pruning of Experts}

In this section, we briefly outline the theoretical justification behind the HOPE objective. The full derivation can be found in Appendix~\ref{app:hope_deriv}.

\subsection{Setup and notation}

Consider an MoE layer with $E$ experts. For input token $x\in\mathbb{R}^{d}$, the layer output is $h(x) = \sum_{k=1}^{E}g_{k}(x)f_{k}(x)$, where $f_{k}(x)\in\mathbb{R}^{d}$ is the output of expert $k$, and $g_{k}(x)\in[0,1)$ is the gate weight. The router maps $x$ to a logit per expert and selects the $K$ highest-logit experts to form the selected set $T(x)\subseteq\{1,\ldots,E\}$, with $\vert T(x)\vert = K$. The logits of experts in $T(x)$ are softmax normalized to obtain gate weights $g_{k}(x)$, with $g_{k}(x) > 0 \Leftrightarrow k \in T(x)$ and $\sum_{k\in T(x)}g_{k}(x) = 1$.

Now suppose we delete a \textit{prune-set} $P\subset\{1,\ldots,E\}$ of experts from the layer. For a token $x$, the pruned experts in $P\cap T(x)$ (i.e. those which \textit{would} have been selected if not removed) are replaced by the next-highest-logit experts $R(x)$, which are now in top-$K$ ($\vert R(x)\vert = \vert P\cap T(x)\vert$). The new selected set is $(T(x) \setminus P) \cup R(x)$, with updated gate weights $g_{k}'(x)$, with $g_{k}'(x) > 0 \Leftrightarrow k \in (T(x) \setminus P) \cup R(x)$. The \textit{pruned} layer's output is $h'(x) = \sum_{k=1}^{E}g_{k}'(x)f_{k}(x)$. Our goal is to find the prune-set $P$ (given a prescribed budget $\vert P\vert$) that minimizes the \textit{pruning error} $\Vert h(x) - h'(x)\Vert$.

\subsection{Error decomposition}

Following \citet{reap2025}, we decompose the pruning error of a MoE layer into two components:
$$h(x) - h'(x) = \bigl[\sum_{j \in P \cap T(x)} g_j(x) f_j(x) - \sum_{i \in R(x)} g_i'(x) f_i(x)\bigr] + \sum_{k \in T(x) \setminus P} (g_k(x) - g_k'(x)) f_k(x)$$
The first term is the \textit{substitution error}, which arises from swapping the pruned experts in $P\cap T(x)$ for their replacements $R(x)$. The second term is the \textit{renormalization error}, which arises from rescaling the gates of the retained experts. 

As with \citet{reap2025}, we note that the renormalization error is typically small (we also provide a proof of this in Appendix~\ref{app:hope_deriv}), and so we focus on minimizing the substitution error.

\subsection{The second-order objective}

Our goal is to find the prune-set $P$ which minimizes the substitution error:

\begin{equation}
    P^{*} = \argmin_{P} \bigl\|\sum_{j \in P \cap T(x)} g_{j}(x) f_{j}(x) - \sum_{i \in R(x)} g_{i}'(x) f_{i}(x)\bigr\|_{2}^{2}
    \label{eq:min-sub-error}
\end{equation}

Importantly, we minimize the \textit{squared} substitution error, which reveals interaction terms. The substitution error itself is intractable to optimize directly due to its reliance on $R(x)$ and $g_{i}'(x)$, which depend on the unknown prune-set $P$. We therefore define a tractable upper bound that depends only on pruned experts' contributions (which are observable without committing to $P$):

\begin{equation}
    Z = \bigl[\sum_{j \in P\cap T(x)} g_{j}(x) \cdot \Vert f_j(x)\Vert_{2}\bigr]^{2}
    \label{eq:z-def}
\end{equation}

Intuitively, $Z$ measures the total joint contribution of the prune-set. Expanding the square reveals pairwise interaction terms between pruned experts. Note that $Z$ depends on input tokens $x$ and the prune-set $P$. This definition of $Z$ admits the following upper bound on the substitution error:

\begin{theorem}
For any pruning set $P$, the squared substitution error is bounded by:
\[
\bigl\Vert\sum_{j \in P \cap T(x)} g_j(x) f_j(x) - \sum_{i \in R(x)} g_i'(x) f_i(x)\bigr\Vert_2^2 \leq (1 + \rho)^2 \cdot Z \quad\text{where}\quad \rho = \frac{\max\limits_{k} \Vert f_k(x)\Vert_{2}}{\min\limits_{k} \Vert f_k(x)\Vert_{2}}
\]
\label{thm:z_bound}
\end{theorem}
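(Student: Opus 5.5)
The plan is to bound the norm of the substitution error by the triangle inequality and then control the replacement term by the pruned term. Write $A(x)=\sum_{j\in P\cap T(x)} g_j(x) f_j(x)$ and $B(x)=\sum_{i\in R(x)} g_i'(x) f_i(x)$. Then
\[
\Vert A(x)-B(x)\Vert_2 \le \Vert A(x)\Vert_2+\Vert B(x)\Vert_2 \le \sum_{j\in P\cap T(x)} g_j(x)\Vert f_j(x)\Vert_2 + \sum_{i\in R(x)} g_i'(x)\Vert f_i(x)\Vert_2 .
\]
The first sum is exactly $\sqrt{Z}$. It therefore suffices to show that the second sum is at most $\rho\sqrt{Z}$; squaring then gives the claim. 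The degenerate case $P\cap T(x)=\emptyset$ is trivial, since then $R(x)=\emptyset$ and both sides vanish. Throughout, I assume every $\Vert f_k(x)\Vert_2>0$, so that $\rho$ is well defined.

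The key step, and the only nontrivial one, is a gate-mass comparison:
\[
\sum_{i\in R(x)} g_i'(x)\le \sum_{j\in P\cap T(x)} g_j(x).
\]
Let $\ell_k$ denote the router logits and set $c=\sum_{k\in T(x)\setminus P} e^{\ell_k}$, $a=\sum_{j\in P\cap T(x)} e^{\ell_j}$ and $b=\sum_{i\in R(x)} e^{\ell_i}$. The replacements are the next-highest-logit experts outside $T(x)$, and $\vert R(x)\vert=\vert P\cap T(x)\vert$. Every replacement has logit no larger than every member of $T(x)$, so pairing the two sets arbitrarily gives $b\le a$. Softmax over the new selected set gives $\sum_{R} g_i' = b/(b+c)$, while $\sum_{P\cap T} g_j = a/(a+c)$. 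Because $t\mapsto t/(t+c)$ is nondecreasing for $c\ge 0$, the inequality follows. When $c=0$, i.e. all selected experts are pruned, both sides equal $1$. I expect this step to be the main obstacle, since it is the only place where the top-$K$ and softmax structure enters; the rest is norm bookkeeping.

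With the comparison in hand, bound each norm by the max or the min:
\[
\sum_{i\in R(x)} g_i'\Vert f_i\Vert_2 \le \max_k\Vert f_k\Vert_2 \sum_{R} g_i' \le \max_k\Vert f_k\Vert_2\sum_{P\cap T} g_j \le \frac{\max_k\Vert f_k\Vert_2}{\min_k\Vert f_k\Vert_2}\sum_{P\cap T} g_j\Vert f_j\Vert_2 = \rho\sqrt{Z}.
\]
Combining this with the triangle-inequality bound gives $\Vert A-B\Vert_2\le(1+\rho)\sqrt{Z}$. Squaring yields the statement of Theorem~\ref{thm:z_bound}.
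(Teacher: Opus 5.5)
Your proof is correct and follows essentially the same route as the paper. The paper expands the squared error and applies Cauchy--Schwarz to the cross term to get the bound $\bigl(\Vert A(x)\Vert_2+\Vert B(x)\Vert_2\bigr)^2$ in your notation, which is just your triangle inequality; it then bounds the replacement term with the same gate-mass lemma (softmax over the new selected set, pruned logits dominating replacement logits) and the same max/min norm substitution. Your explicit treatment of the degenerate cases $P\cap T(x)=\emptyset$ and $c=0$, where the paper's claim that all quantities are strictly positive does not literally hold, is a small gain in rigor.
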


$Z$ is a tractable surrogate whose minimization provably reduces the (squared) substitution error. The factor $(1+\rho)^2$ (independent of $P$) arises from bounding the contribution of replacement experts $R(x)$, whose identity depends on $P$ and is unknown at optimization time. This factor is conservative: it assumes a worst-case replacement whose output norm equals the layer maximum. In practice, the gap between $Z$ and the true error is far smaller than the bound suggests.
  
\subsection{Minimizing $Z$}

Given the upper bound in Theorem~\ref{thm:z_bound}, our goal is to find the prune-set $P$ which minimizes $Z$. More precisely, we minimize the expectation of $Z$ over the calibration set $\mathcal{D}$: $P^{*} = \argmin_{P}\mathop{\mathbb{E}}_{x\in \mathcal{D}}[Z]$. To minimize $\mathop{\mathbb{E}}_{x\in \mathcal{D}}[Z]$, we substitute the expanded form of $Z$ (Equation~\ref{eq:z-def}) into the expectation, and replace the expectation with the empirical average. This minimization can equivalently be formulated as a binary quadratic program (QP) over the matrix $F\in\mathbb{R}^{E\times E}$, defined as follows:

\begin{equation}
    F_{i,j} = \frac{1}{N} \sum_{x\in\mathcal{X}_{i,j}} g_{i}(x)g_{j}(x)\Vert f_{i}(x)\Vert_{2}\Vert f_{j}(x)\Vert_{2}\quad\text{where}\quad \mathcal{X}_{i,j} = \{x: i,j \in T(x)\}
    \label{eq:f-def}
\end{equation}

The $F$-matrix is the core calibration result used by HOPE. $F$ encodes both individual expert contributions (on the diagonal) and pairwise expert co-contributions (on the off-diagonal). High $F_{i,j}$ implies that pruning experts $i,j$ together incurs extra error from their reinforcing contributions. To formulate the QP, we introduce binary decision variables $p_{k} \in \{0,1\}$ where $p_{k} = 1$ iff expert $k \in P$.

\begin{theorem}
Given a target pruning budget $\vert P\vert$ (the number of experts to remove per layer), the prune-set $P^{*}$ which minimizes $\mathop{\mathbb{E}}_{x\in \mathcal{D}}[Z]$ is also the solution to the following quadratic program:
\[
P^{*} = \argmin_{p}\; p^{\top}Fp \quad \text{s.t.} \quad p \in \{0,1\}^{E}, \sum\limits_{k=1}^{E}p_{k} = \vert P\vert
\]
\label{thm:qp}
\end{theorem}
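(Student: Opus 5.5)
The plan is to show that, for every binary vector $p$, the empirical expectation $\mathop{\mathbb{E}}_{x\in\mathcal{D}}[Z]$ equals the quadratic form $p^{\top}Fp$. Once this identity holds, the two optimization problems have the same objective over the same feasible set. That feasible set is the set of prune-sets of size $\vert P\vert$, which correspond one-to-one with $p\in\{0,1\}^{E}$ satisfying $\sum_k p_k=\vert P\vert$. Their argmins then coincide, which is the claim of Theorem~\ref{thm:qp}.

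\textbf{Rewriting $Z$ for a single token.} Fix a token $x$ and let $a_k(x) = g_k(x)\Vert f_k(x)\Vert_2$. Because $g_k(x)>0$ iff $k\in T(x)$, and $g_k(x)=0$ otherwise, the restricted sum in Equation~\ref{eq:z-def} can be extended to all experts using the indicator $p_k$:
\[
\sum_{j\in P\cap T(x)} g_j(x)\Vert f_j(x)\Vert_2 = \sum_{k=1}^{E} p_k\, a_k(x).
\]
Here $p_k$ removes experts outside $P$, and the vanishing gate removes experts outside $T(x)$. Squaring and expanding gives
\[
Z(x) = \sum_{i=1}^{E}\sum_{j=1}^{E} p_i p_j\, a_i(x)a_j(x).
\]
In this double sum, $a_i(x)a_j(x)$ is nonzero only when both $i,j\in T(x)$, i.e.\ only when $x\in\mathcal{X}_{i,j}$. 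The diagonal terms $i=j$ are covered by the same description, since $\mathcal{X}_{i,i}$ is the set of tokens routed to $i$.

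\textbf{Averaging over the calibration set.} Replace the expectation by the empirical average over the $N$ calibration tokens and swap the finite sums:
\[
\frac{1}{N}\sum_{x\in\mathcal{D}} Z(x) = \sum_{i,j} p_i p_j\, \frac{1}{N}\sum_{x\in\mathcal{D}} a_i(x)a_j(x) = \sum_{i,j} p_i p_j\, \frac{1}{N}\sum_{x\in\mathcal{X}_{i,j}} a_i(x)a_j(x) = \sum_{i,j} p_i F_{i,j}\, p_j = p^{\top}Fp.
\]
The second equality uses the support observation from the previous step, and the third is exactly the definition of $F$ in Equation~\ref{eq:f-def}. This establishes the identity, and since $F$ is symmetric no symmetrization step is needed.

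\textbf{Main obstacle.} No step here is genuinely hard. The point needing care is the bookkeeping that identifies each summand's support with $\mathcal{X}_{i,j}$, including the diagonal case. A second check is that $N$ denotes $\vert\mathcal{D}\vert$, i.e.\ the number of calibration tokens used to form the empirical expectation. The last thing to state explicitly is the bijection between prune-sets of the prescribed size and feasible binary vectors, so that the minimizers transfer between the two formulations.
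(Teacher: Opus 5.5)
Your proposal is correct and follows essentially the same route as the paper: both proofs write $Z$ as the square of a sum over all $E$ experts weighted by $p_k$, expand it into a double sum, exchange it with the empirical $\frac{1}{N}$ average, and identify each coefficient with the unconditionally normalized $F_{i,j}$ of Equation~\ref{eq:f-def}. The only differences are cosmetic: the paper keeps explicit indicators $\mathbf{1}[k\in T(x)]$ where you use the vanishing gates, and you state explicitly the bijection between size-$\vert P\vert$ prune-sets and feasible binary vectors, which the paper leaves implicit.
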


\textbf{Normalization}: This formulation of $F$ (Equation~\ref{eq:f-def}) minimizes $\mathbb{E}[Z]$ exactly (Theorem~\ref{thm:qp}). In practice, we replace the \textit{unconditional} average over the total number of tokens, $N$, with a \textit{conditional} average, $\vert\mathcal{X}_{i,j}\vert$. This follows the analogous design choice in REAP, which prevents rarely co-selected expert pairs (which nonetheless contribute strongly) from being undervalued. 

\subsection{Connection to REAP}

HOPE's $F$ matrix encodes individual expert importance scores on the diagonal, and this diagonal effectively recovers the scores used by REAP to independently rank experts for pruning. Specifically, $F_{k,k} = \mathbb{E}[(g_{k}(x)\Vert f_{k}(x)\Vert)^{2}]$, whereas the squared REAP score is $(S_{k}^{\text{REAP}})^{2} = \mathbb{E}[g_{k}(x)\Vert f_{k}(x)\Vert]^{2}$. These expressions differ only by the variance of expert contributions over activating tokens. Indeed, HOPE with zeroed off-diagonals empirically recovers the same prune-set as REAP (Figure~\ref{fig:S4fdiag_vs_reap}). In contrast with REAP, HOPE uses the full $F$, where $F_{i,j}$ ($i\neq j$) encodes the interaction between these experts when they are both activated. Thus, any improvement that HOPE achieves over REAP is directly attributable to these off-diagonal interaction terms which encode second-order effects.

\subsection{Practical implementation}

As with other expert-pruning methods, HOPE requires a single forward pass over the calibration set. For each token, we record the active experts, their gate values, and their outputs. We also collect interaction terms, which are the product of the (gate-weighted) expert outputs when two experts are co-activated. For each layer, this gives us an ($E\times E$) $F$-matrix. We solve each layer's QP (using a standard QP solver) via a continuous relaxation with the conditions in Theorem~\ref{thm:qp}, rounding the continuous solution to the top $\vert P\vert$ entries. The relaxation is empirically tight: continuous solutions concentrate near 0 and 1, and the continuous--binary gap is negligible (Figure~\ref{fig:S5qp_relaxation}). In terms of computational cost, HOPE's calibration step is dominated by forward passes (shared with all methods), and solving the QP adds negligible additional cost (1--2 seconds per layer, for typical values of $E$).

\section{Experiments}

\subsection{Experimental setup}
\label{sec:exp-setup}

\begin{itemize}[leftmargin=1em]
    \item \textbf{Models}: Qwen3.5-122B-A10B (256 experts, top-8 routing, 48 MoE layers), Qwen3.5-35B-A3B (256 experts, top-8 routing, 40 MoE layers), GLM-4.5-Air (128 experts, top-8 routing with 1 shared expert, 45 MoE layers); note that shared experts are never pruned
    \item \textbf{Baselines}: REAP \citep{reap2025}, EAN \citep{jaiswal2025}, MAN \citep{liu2025man}, Frequency
    \item \textbf{Calibration data}: Evol-CodeAlpaca-v1 (coding-focused) \citep{wizardcoder2023}, SWE-Bench verified trajectories (agentic-focused) \citep{swebench2024}
    \item \textbf{Pruning rates}: 10\%, 20\%, 25\%, 30\%, 40\%, 50\% (prune rate of 10\% means removing 10\% of experts in each layer); for SWE-Bench Pro, we only test 10\%, 25\%, and 50\%
    \item \textbf{Benchmarks}: Tulu3 Dev suite (GSM8K, MATH, IFEval, MMLU, BBH, TruthfulQA, PopQA, LiveCodeBench) \citep{tulu3_2024,livecodebench2024}, SWE-Bench Pro \citep{deng2025swebenchpro}
\end{itemize}

We tested HOPE on several models ranging from 35 billion to well over 100 billion parameters, across multiple architectural families. We focused our benchmarks on agentic tasks, as well as standard coding, math, instruction following, general knowledge, etc. This gives 54 total configurations (3 models $\times$ 2 calibration sets $\times$ 6 pruning rates for Tulu, or 3 pruning rates for SWE-Bench Pro). 

\subsection{HOPE performance comparison}

\begin{figure}[h]
\centering
\includegraphics[width=\columnwidth]{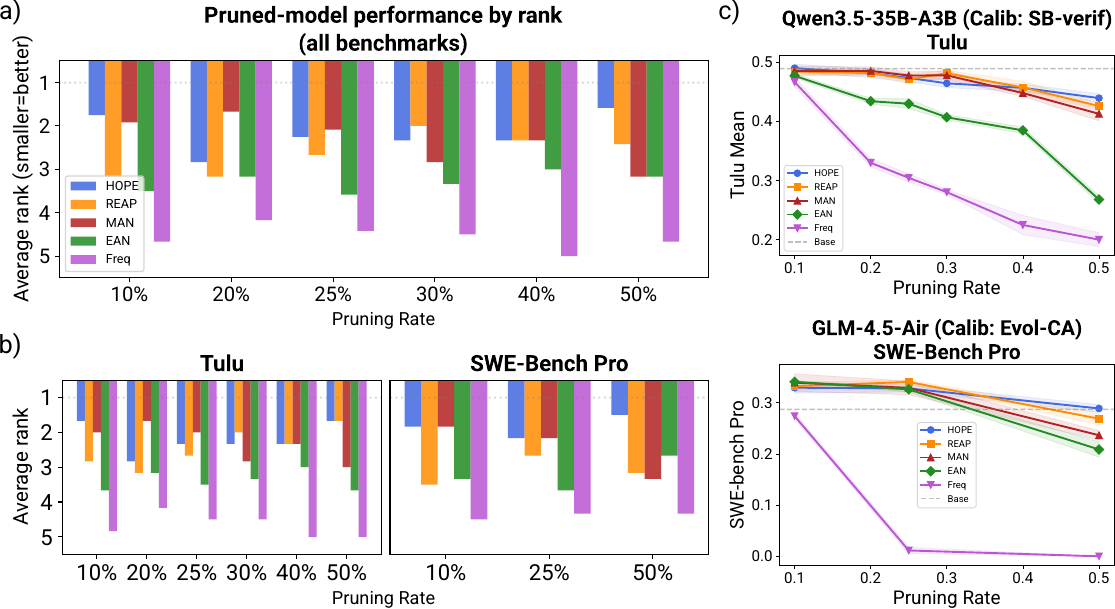}
\caption{\small HOPE achieves the best overall performance across pruning conditions, and dominates at aggressive pruning rates. \textbf{a)} Average rank of HOPE compared to other baseline pruning methods, averaged over models, calibration sets, and benchmarks (54 conditions total). \textbf{b)} Average ranks separated by benchmark: Tulu3 and SWE-Bench Pro. \textbf{c)} Performance vs pruning rate for representative conditions; dashed gray line indicates the unpruned base model’s performance. Note: pruned models occasionally match or slightly outperform the unpruned baseline; this is consistent with observations that calibration can act as implicit specialization, removing low-relevance experts whose contributions add noise to the target tasks \citep{dong2025easyep}.}
\label{fig:1perf}
\end{figure}

We tested HOPE over a dense set of configurations and find that HOPE is the overall best-performing method, particularly at higher pruning rates (Figure~\ref{fig:1perf}). Over all conditions, HOPE had the best average rank of 2.07, finishing as the top-1 method or in the top-2 more often than any other method (top-1 in 39\% of conditions, top-2 in 67\% of conditions). Furthermore, in a head-to-head comparison, HOPE beat every other individual baseline in the majority of conditions, with an average head-to-head win rate of 73\% (Figure~\ref{fig:S1perf_full}, Tables~\ref{tab:s1tulu_qwen3.5-35b}--\ref{tab:s1swebenchpro_glm4.5}).

At more aggressive pruning rates (40--50\%), HOPE achieved an average rank of 1.58, and finished as the top-1 method in the majority of conditions. This advantage was particularly marked in the agentic benchmark, where HOPE beat REAP in all experiments at a high pruning rate (40--50\%), with a mean gap of +2.8\% and up to +6.1\%. This result matches our theory: HOPE's advantage over first-order methods should grow with pruning rate. As more experts are removed, more interactions are broken, and the off-diagonal terms of $F$ become more important. Higher pruning rates are also the most deployment-relevant settings, where memory savings are most crucial.

HOPE consistently improved over REAP, and this directly validates that the off-diagonal interaction terms provide signal which improves performance beyond REAP (over all conditions and pruning rates, REAP had a top-1 rate of 15\% and a top-2 rate of 48\%). MAN was also a strong competitor overall at lower pruning rates, but it lost its edge at more aggressive pruning rates. Furthermore, HOPE is robust: it finished last in only 1/54 conditions (20\%-pruned Qwen3.5-122B-A10B), where the gap to the best method was only -1.7\%. In contrast, REAP had 5 last-place finishes and Freq had 43. HOPE's worst deficit on any single condition was -4.5\%, and its deficits are smaller and far less frequent, while its gains reach up to +6.1\% (Figure~\ref{fig:S1perf_full}, Tables~\ref{tab:s1tulu_qwen3.5-35b}--\ref{tab:s1swebenchpro_glm4.5}).

Interestingly, HOPE's advantage was most pronounced on agentic coding (SWE-bench Pro), code generation (LiveCodeBench), and compositional reasoning (BBH), where gains over the next-best method reached +2.0\%, +19.6\%, and +11.7\% respectively. On simpler recall tasks (e.g. PopQA), the advantage of second-order pruning was smaller. This pattern is consistent with our theoretical motivation: tasks requiring diverse expert combinations over many tokens may benefit most from preserving cooperative structure.

\subsection{Structure of HOPE's $F$-matrix}

\begin{figure}[t]
\centering
\includegraphics[width=0.8\columnwidth]{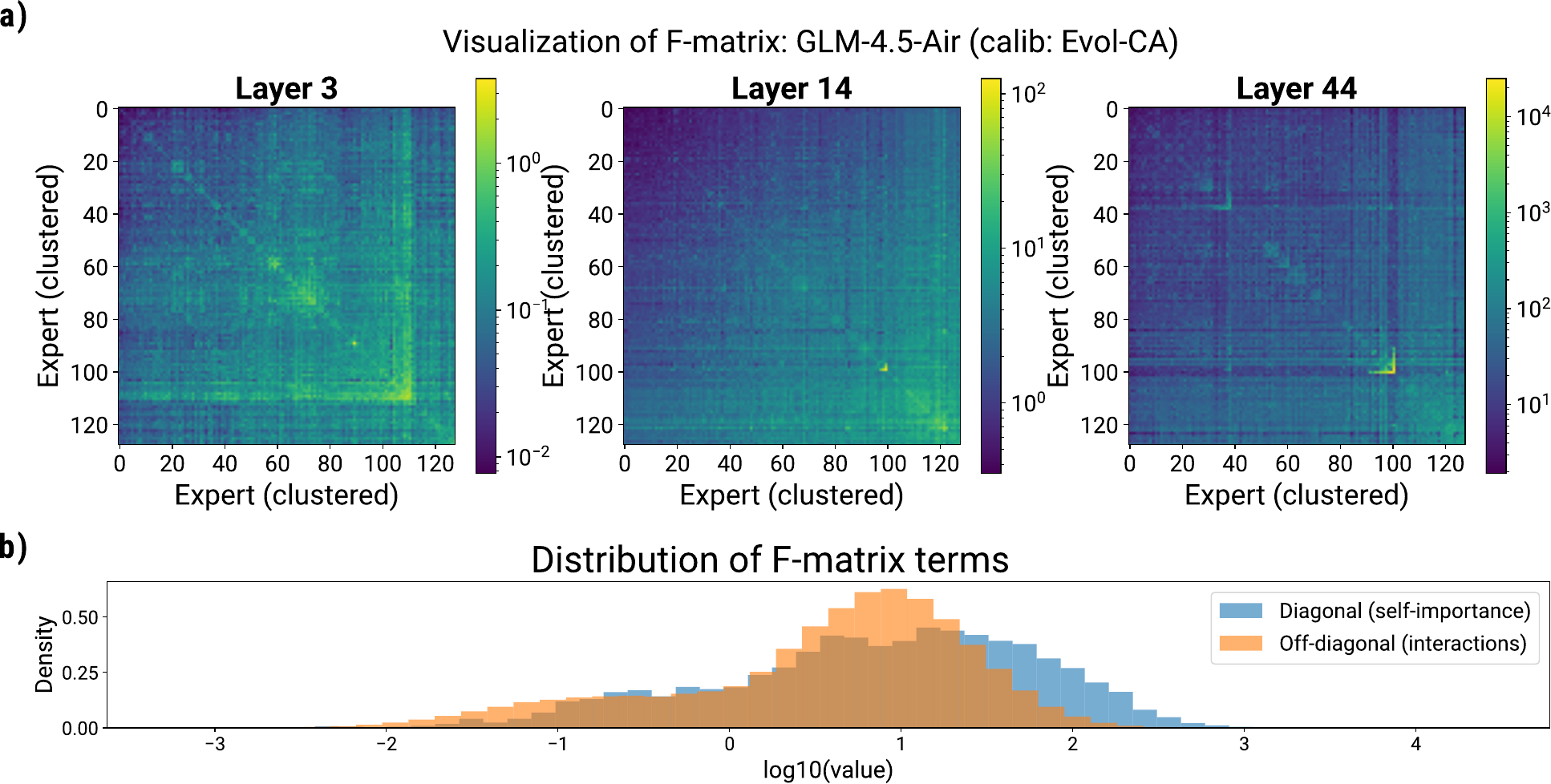}
\caption{\small The interaction matrix $F$ recorded by HOPE exhibits expert cooperation. \textbf{a)} $F$-matrix for GLM-4.5-Air at three representative layers (early, middle, late). Experts have been hierarchically clustered per layer for visualization. Visible block structure indicates groups of experts with high pairwise interaction scores. \textbf{b)} Distribution of diagonal vs off-diagonal terms in $F$ across all layers (GLM-4.5-Air). Off-diagonal (interactive) terms are substantial in magnitude (compared to the diagonal), showing that pairwise interactions are a non-negligible component of the pruning objective.}
\label{fig:2fmatrix}
\end{figure}

Having shown that HOPE outperforms first-order pruning methods---particularly at high pruning rates---we now examine the structure of the $F$-matrix that drives this advantage. Importantly, the off-diagonal entries of $F$ are not noise: they exhibit noticeable substructure  (Figure~\ref{fig:2fmatrix}a), and their average magnitude is 33\% of the diagonal (Figure~\ref{fig:2fmatrix}b). Clusters of experts with high $F_{i,j}$ tend to be co-selected and contribute jointly. These interaction terms vary by layer but are consistently present throughout the network. HOPE accounts for this structure that first-order methods ignore entirely, leading to substantially different prune-sets.

\begin{figure}[t]
\centering
\includegraphics[width=0.8\columnwidth]{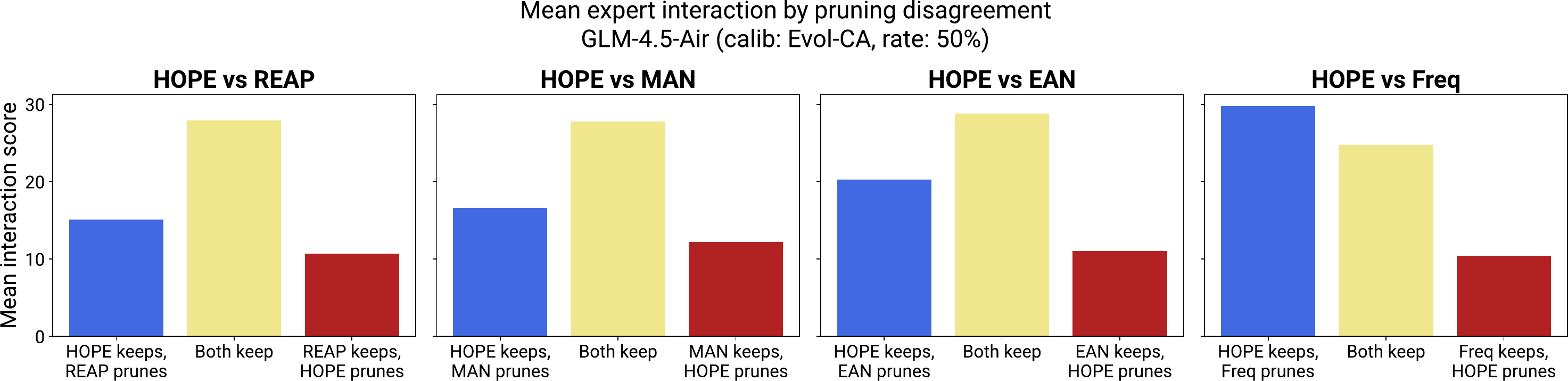}
\caption{\small HOPE preferentially retains experts with stronger cooperative structure. For each baseline, we classify experts by disagreement (kept only by HOPE, kept only by the baseline, or kept by both). Experts kept by HOPE tend to have higher cooperative score (measured as the mean $F$ score with HOPE-retained experts).}
\label{fig:3exp_interaction_breakdown}
\end{figure}

Indeed, when HOPE and a first-order method disagree, HOPE preferentially retains experts with stronger interactions with other surviving experts (measured as the mean $F$ score) (Figure~\ref{fig:3exp_interaction_breakdown}). Mechanistically, the QP assigns a higher cost to pruning experts which participate in cooperative clusters, causing them to be retained even if their individual importance (diagonal of $F$) is lower. In contrast, experts retained by a first-order method (e.g. REAP) but pruned by HOPE have high individual importance but lower interaction scores with other retained experts. As such, first-order methods tend to be misled by experts' self-importance and retain them regardless of cooperative structure.

\begin{figure}[t]
\centering
\includegraphics[width=0.7\columnwidth]{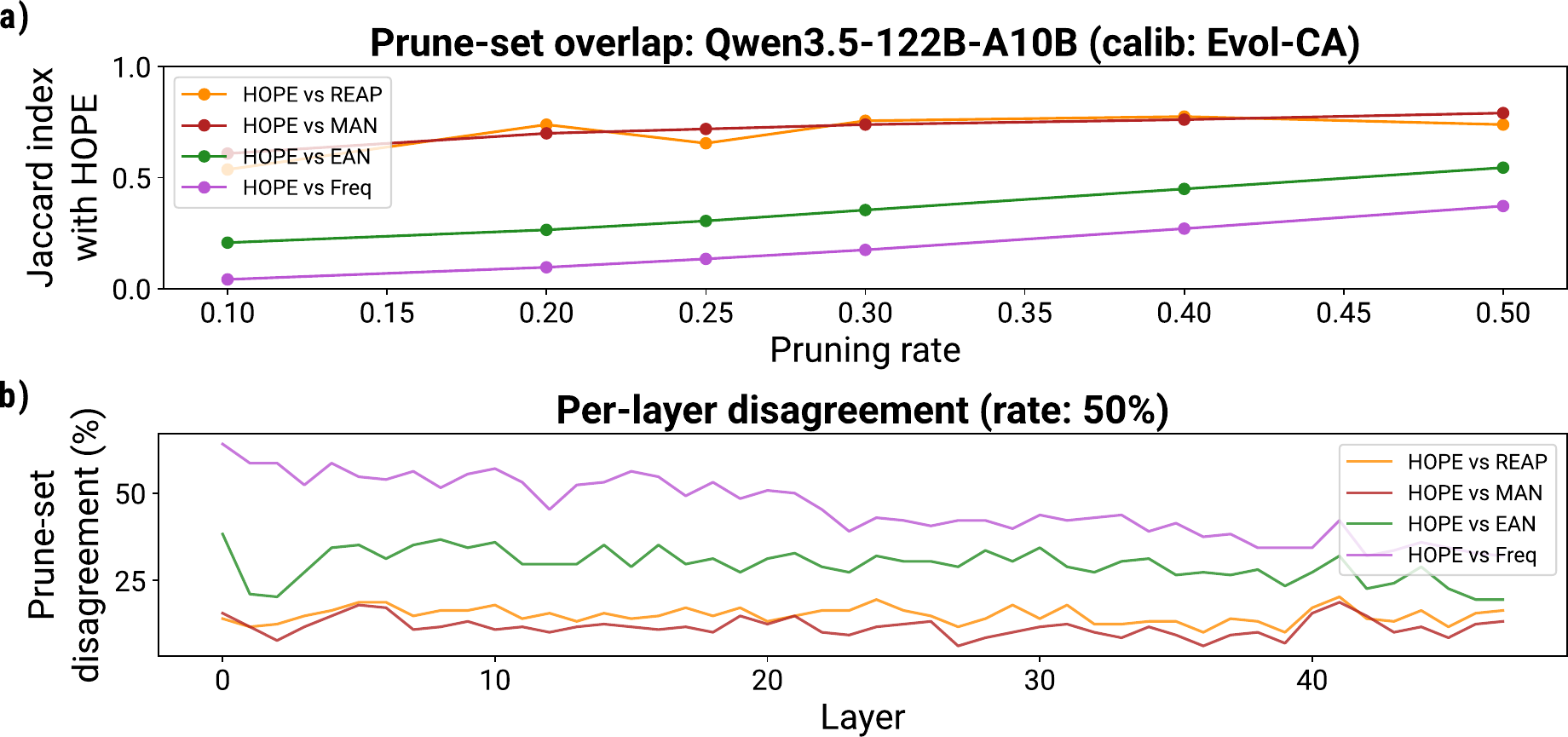}
\caption{\small HOPE makes distinct pruning decisions from other baseline methods. \textbf{a)} Overlap between HOPE’s pruning set versus each baseline, as a function of pruning rate (Qwen3.5-122B-A10B). We measure overlap as the Jaccard index, due to the prune-set size varying over different prune rates. \textbf{b)} For the same model and calibration set, the percentage of disagreeing pruned experts (per layer) between HOPE versus other methods at a pruning rate of 50\%. Disagreement is distributed across the full network.}
\label{fig:4blacklist_overlap}
\end{figure}

HOPE's interaction-aware pruning decisions lead to substantially distinct prune-sets compared to other methods, but HOPE's prune-set is least distant from REAP and MAN (all three methods rely on expert outputs $f(x)$ averaged over expert-activating tokens) (Figure~\ref{fig:4blacklist_overlap}a). As expected, prune-set overlap between methods increases with pruning rate: as more experts are pruned, all methods are forced to remove the agreed-upon unimportant experts, and thereby converge on a larger shared set. At the same time, however, the remaining disagreements at higher pruning rates become more critical: the impact of each marginal pruning decision affects a larger fraction of the surviving experts, who have fewer cooperative partners. Thus, HOPE's interaction-aware decisions yield the largest gains in this high-prune-rate regime. 

Disagreement between HOPE and other methods is also distributed across all layers (Figure~\ref{fig:4blacklist_overlap}b), consistent with the observation that interactions are ubiquitous throughout the network (Figure~\ref{fig:2fmatrix}a, Figure~\ref{fig:S2pmi}). This indicates that second-order pruning can be valuable across the network.

\subsection{Calibration and trial robustness}

\begin{figure}[t]
\centering
\includegraphics[width=0.5\columnwidth]{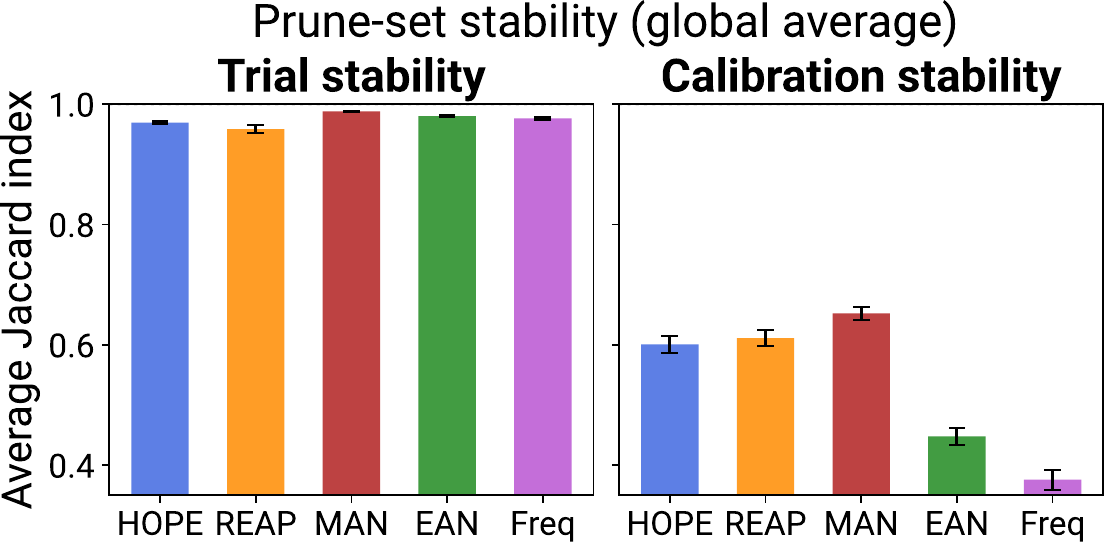}
\caption{\small Prune-set stability across calibration sets and trials. \textbf{a)} Jaccard index across independent trials of the same method (averaged over models, pruning rates, and calibration sets). \textbf{b)} Jaccard index between prune-sets produced from different calibration sets: Evol-CodeAlpaca and SWE-Bench verified. Jaccard index is averaged over models, pruning rates, and trials (higher is more stability).}
\label{fig:5sensitivity}
\end{figure}

Expert cooperation---as captured by HOPE's $F$-matrix---is a stable property of the model, not an artifact of calibration noise. Over 3 random trials, prune-set stability is high ($> 0.95$ Jaccard similarity on average) for all methods including HOPE (Figure~\ref{fig:5sensitivity}a, Figure~\ref{fig:S3blacklist_overlap_grid}). This confirms that the interaction structure HOPE exploits is reliably recoverable from calibration data. As expected, stability across different calibration sets is somewhat lower: different calibration domains lead to slightly different pruning decisions (Figure~\ref{fig:5sensitivity}b). This effect is shared across all methods, and HOPE is not more sensitive than REAP (HOPE's average calibration Jaccard index is 0.60, versus REAP's 0.61).

\subsection{Compatibility with downstream training}


\begin{figure}[h]
\centering
\includegraphics[width=0.4\columnwidth]{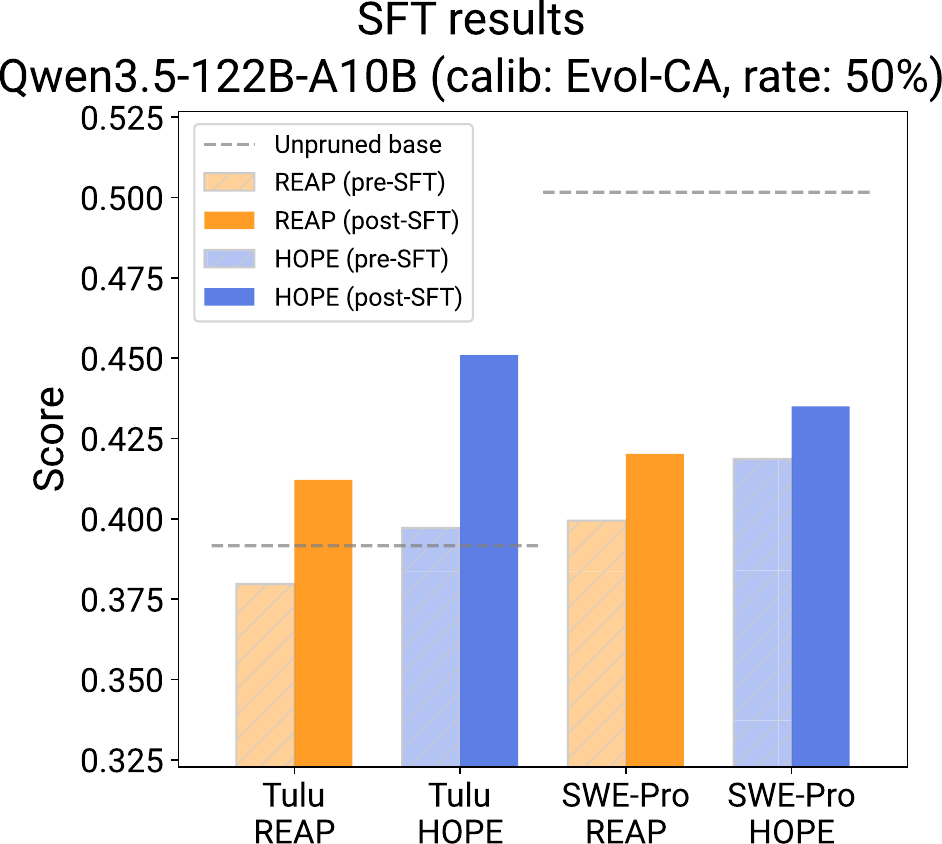}
\caption{\small SFT results for pruned Qwen3.5-122B-A10B models. Pre-SFT and post-SFT use matched trials for a fair comparison. The gray dashed line indicates the performance of the unpruned model.}
\label{fig:6sft}
\end{figure}

Finally, we show that HOPE-pruned models serve as strong starting points for downstream training. Crucially, downstream training is not required to realize HOPE's advantage; we include this experiment to show that HOPE's structural advantage persists even after training. On REAP- or HOPE-pruned checkpoints of Qwen3.5-122B-A10B, we performed SFT on LiveCodeBench traces (collected from the unpruned model) for 1 billion tokens of training. HOPE outperformed REAP both before and after SFT (Figure~\ref{fig:6sft}): that is, fine-tuning narrows the gap but does not close it. This suggests that the expert structure preserved by second-order pruning provides lasting benefits that fine-tuning alone cannot replicate.

\section{Conclusion}

We presented HOPE, the first theoretically derived expert-pruning objective that directly accounts for pairwise expert interactions, formulated as a quadratic program over an interaction matrix. We showed that HOPE generalizes REAP and outperforms it (along with other methods), especially at aggressive pruning rates where interaction effects dominate. Our analysis confirms that HOPE's pruning decisions are largely influenced by off-diagonal interaction terms, which direct the selective retention of experts with high cooperative scores. Our results validate our theoretical prediction: considering these interactive terms---a signal which all first-order methods miss---in expert-pruning decisions yields better prune-sets. Importantly, HOPE's advantage over other methods is greatest in the high-pruning-rate regime, which is most relevant to real-world deployment.

\textbf{Limitations:} Calibration with HOPE does require additional time and memory due to its quadratic nature, although the additional cost is small. HOPE's calibration step takes about 6--7\% longer than REAP on the same-size calibration set (Table~\ref{tab:S2timing}). The additional memory cost of HOPE to store the $F$-matrix is also negligible for the number of experts we typically have (128--256). HOPE does require marginally more data than REAP due to its estimating inter-expert (quadratic) in addition to per-expert statistics. The difference, however, is minor: to converge to 99\% agreement of the same prune-set, REAP requires 8k prompts whereas HOPE requires 10k prompts (Qwen3.5-122B-A10B, calib: Evol-CA, rate: 50\%) (Figure~\ref{fig:S6f_convergence}). Finally, like most other methods, HOPE prunes experts in each layer independently, and does not account for cross-layer interactions.

\textbf{Future work:} We offer a theoretical framework (and preliminary results) for extending HOPE to cross-layer interactions and cross-layer pruning (Appendix~\ref{app:chope}). Cross-layer HOPE enables non-uniform pruning budgets in each layer, potentially allocating more aggressive pruning to layers with weaker cooperative structure. In practice, however, this framework may introduce non-uniform budget-allocation instability (e.g. over-pruning layers with weak cooperative structure), which may require some additional constraints to address. Additionally, as MoEs continue to scale to larger models---where aggressive pruning is even more important for deployment---HOPE's advantage could grow even further. Finally, we note that HOPE operates only up to second-order (pairwise) interactions, which already demonstrably provide signal beyond first-order methods. Extending HOPE to even higher-order interactions could be valuable, although they are combinatorially difficult to quantify.

\clearpage

\subsection*{AI use statement}

In this work, we used generative AI to:
\begin{itemize}[leftmargin=1em]
    \item Search existing literature for relevant directions and baselines
    \item Check human-written code for bugs
    \item Write the first draft of code specifically for creating plots/figures
    \item Suggest rewordings for specific sentences in the manuscript
    \item Insert citations into the manuscript and download *.bib citations
    \item Reformat CSV files of collected data/results into \LaTeX-formatted tables (via an AI-generated script)
\end{itemize}

We did \textit{not} use AI to write any code which created any main results, other than to identify bugs (which were subsequently verified and fixed by a human). AI was used to write the first draft of code for creating plots and figures, and this code was then verified and revised by a human.

We did \textit{not} use AI to generate research ideas, derive theoretical proofs, design experiments, or interpret results.

We did \textit{not} use AI to write any part of the text for the manuscript, other than as a ``copy-editor'' to search for typos, insert citations, and suggest rewordings for specific sentences or sections when prompted. AI was used to automate the formatting of \LaTeX\, tables from saved CSV files.

We take responsibility for the final content of this work.

\subsection*{Ethics statement}

This research focuses on algorithmic compression of MoE language models. All datasets (calibration or evaluation) are publicly available and do not contain personal information. All models used are also publicly available and open source. Our work's contribution is methodological: we derived and demonstrated a method for reducing the memory footprint of MoEs. Although we hope our work will improve accessibility of generative AI, we also acknowledge that such algorithms could potentially lower barriers to deploying models/agents for harmful uses, as well. Additionally, we pruned pre-trained models, and any biases already present in those models may persist after pruning. In general, we believe our work does not introduce any new ethical considerations beyond those already present in the efficient deployment of LLMs.

\subsection*{Reproducibility statement}

Code for HOPE calibration, QP solving, and all analyses will be released upon publication. The theoretical derivation of HOPE is provided in full in Appendix~\ref{app:hope_deriv}. Section~\ref{sec:exp-setup} details the experimental setup, including models, calibration datasets, pruning rates, and evaluation benchmarks. Appendix~\ref{app:methods} provides supplementary methods describing all analysis procedures in sufficient detail for reproduction, including hyperparameters, solver settings, and evaluation configurations. All experiments use publicly available models and datasets.

\clearpage

\bibliography{main}
\bibliographystyle{iclr2027_conference}

\clearpage

\appendix

\renewcommand{\thefigure}{S\arabic{figure}}
\renewcommand{\thetable}{S\arabic{table}}
\setcounter{figure}{0}
\setcounter{table}{0}

\section{Full derivation of HOPE}
\label{app:hope_deriv}
\subsection{Setup}

For an arbitrary MoE layer with $E$ experts, the output of that layer on input-token representation $x$ is:
\[
h(x) = \sum\limits_{k=1}^{E}g_{k}(x) f_{k}(x)
\]
where $g_{k}(x) \in [0,1)$ is the softmax-normalized gate weight and $f_{k}(x) \in \mathbb{R}^{d}$ is the expert output. We have $g_{k}(x) > 0$ iff $k \in T(x)$ (the top-$K$ selected set), and $\sum\limits_{k=1}^{E} g_{k}(x) = 1$.

Now suppose we prune a set of experts $P$. For any token $x$, the pruned experts $P \cap T(x)$ are replaced by the next-highest-logit experts $R(x)$, with $\vert R(x)\vert = \vert P \cap T(x)\vert$. After pruning, the new gate weights are $g_{k}'(x) > 0$ iff $k \in (T(x) \setminus P) \cup R(x)$, and the new layer output is:
\[
h'(x) = \sum\limits_{k=1}^{E} g_{k}'(x) f_{k}(x)
\]

\subsection{Error decomposition}

The pruning error decomposes as:
\begin{align*}
h(x) - h'(x) &= \underbrace{\bigl[\sum_{j \in P \cap T(x)} g_{j}(x) f_{j}(x) - \sum_{i \in R(x)} g_{i}'(x) f_{i}(x)\bigr]}_{\text{substitution error}} +
\underbrace{\sum_{k \in T(x) \setminus P} (g_{k}(x) - g_{k}'(x)) f_{k}(x)}_{\text{renormalization error}}
\end{align*}

The first term is the \textit{substitution error} and the second is the \textit{renormalization error}. The substitution error is the direct effect of swapping experts, and is the dominant source of error. The renormalization error is the indirect effect of pruning on untouched experts. Because gate values $g_{k}(x)$ must sum to 1, pruning and replacing experts causes the gate values of non-pruned experts in the top-$K$ set ($T(x) \setminus P$) to be rescaled. This particular decomposition was described in \citep{reap2025}. From this point forward, the derivations will diverge substantially.

\subsection{Gate-mass lemma}

An intermediate result which will be helpful is to show that replacement experts in $R(x)$ have a lower total gate mass. More formally:

\begin{lemma}[Replacement experts have lower gate mass]
\label{lem:gate-mass}
Under top-$K$ routing with softmax normalization, suppose experts $P \cap T(x)$ are pruned and replaced with experts $R(x)$. Then $\sum\limits_{i\in R(x)}g_{i}'(x) \leq \sum\limits_{j\in P\cap T(x)}g_{j}(x)$.
\end{lemma}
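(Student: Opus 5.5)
The plan is to work directly with the router logits and reduce the claim to the monotonicity of a one-variable function. Write $z_k(x)$ for the logit of expert $k$ and set $w_k = e^{z_k(x)} > 0$. Partition the relevant experts into three disjoint sets: the retained selected experts $S = T(x)\setminus P$, the pruned selected experts $A = P\cap T(x)$, and the replacements $R(x)$. Define the corresponding masses
\[
s = \sum_{k\in S} w_k, \qquad a = \sum_{j\in A} w_j, \qquad r = \sum_{i\in R(x)} w_i .
\]
Softmax normalization over the selected set gives $g_j(x) = w_j/(s+a)$ for $j\in T(x)$. After pruning, the selected set is $S\cup R(x)$, so $g_i'(x) = w_i/(s+r)$. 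The claim is therefore equivalent to
\[
\frac{r}{s+r} \le \frac{a}{s+a}.
\]

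The first key step is to show $r \le a$. By definition of top-$K$ routing, every expert outside $T(x)$ has logit at most the smallest logit in $T(x)$. Since $R(x)\cap T(x)=\emptyset$, every replacement $i\in R(x)$ satisfies $z_i \le z_j$ for every $j\in A\subseteq T(x)$. We also have $|R(x)| = |A|$, so we can pair the two sets by any bijection and sum the termwise inequalities $w_i \le w_j$ to get $r\le a$. Ties in the logits do not matter: the routing rule places $R(x)$ weakly below $T(x)$ in logit order, and only that weak ordering is used.

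The second step is monotonicity. The map $t\mapsto t/(s+t)$ is nondecreasing on $t\ge 0$ for fixed $s\ge 0$, since its derivative is $s/(s+t)^2\ge 0$. Combining this with $r\le a$ gives the desired inequality.

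The only point needing care is the degenerate bookkeeping. If $A=\emptyset$, then $R(x)=\emptyset$ and both sides are $0$. If $S=\emptyset$, meaning every selected expert is pruned, then both sides equal $1$ provided replacements exist, i.e. $E-|P|\ge K$, which I will take as implicit in the setup. Denominators are positive because all $w_k>0$. I expect no real obstacle beyond stating clearly that replacements come from strictly outside $T(x)$ and hence are dominated in logit by every pruned selected expert.
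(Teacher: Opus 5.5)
Your proposal is correct and follows essentially the same route as the paper. Both arguments compare the exponentiated-logit masses of the pruned and replacement experts, using that replacements lie outside $T(x)$ and so have logits no larger than any pruned selected expert, and then compare $\frac{r}{s+r}$ with $\frac{a}{s+a}$. The paper makes that last comparison by cross-multiplying, while you use monotonicity of $t \mapsto t/(s+t)$; your explicit pairing via $|R(x)| = |P\cap T(x)|$ and your treatment of the degenerate cases $P\cap T(x)=\emptyset$ and $T(x)\setminus P=\emptyset$ tidy up points the paper leaves implicit.
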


\textbf{Proof}: Define $l_{k}(x)$ be the router logit for expert $k$. Note this is a function of the input token and the router, and is unchanged by pruning.

Recall, after pruning, the new selected set becomes $T'(x) = (T(x) \setminus P)\cup R(x)$. We define the following quantities:
\begin{itemize}
    \item $S_{P} = \sum\limits_{j\in P\cap T(x)}\exp(l_{j}(x))$ (sum of exponentiated logits for pruned experts)
    \item $S_{T} = \sum\limits_{k\in T(x)\setminus P}\exp(l_{k}(x))$ (sum of exponentiated logits for retained experts)
    \item $S_{R} = \sum\limits_{i\in R(x)}\exp(l_{i}(x))$ (sum of exponentiated logits for replacement experts)
\end{itemize}

Before pruning, the gate weights are a softmax over $T(x)$. Then the total gate mass of the pruned experts (before pruning) is $\sum\limits_{j\in P\cap T(x)}g_{j}(x) = \frac{S_{P}}{S_{T} + S_{P}}$.

After pruning, the gate weights are a softmax over $T'(x)$. Then the total gate mass of the replacement experts (after pruning) is $\sum\limits_{i\in R(x)}g_{i}'(x) = \frac{S_{R}}{S_{T} + S_{R}}$.

Crucially, the logit $l_{j}(x)$ for any pruned expert $j\in P\cap T(x)$ must be \textit{at least} the logit $l_{i}(x)$ for any replacement expert $i\in R(x)$. This is because every pruned expert $j\in P\cap T(x)$ had a top-$K$ logit, and every replacement expert did not have a top-$K$ logit. Since exponentiation is a monotonically increasing function, we have that $S_{P} \geq S_{R}$.

To complete the proof for Lemma~\ref{lem:gate-mass}, we note that we simply need to show $\frac{S_{P}}{S_{T} + S_{P}} \geq \frac{S_{R}}{S_{T} + S_{R}}$. Since all quantities are strictly positive, we can cross-multiply and see that this is true precisely when $S_{P} \geq S_{R}$ (which we showed just above).

\subsection{Renormalization error is small}

Recall, the pruning error (which we wish to minimize) decomposes into the substitution error and renormalization error. Next, let us show that the renormalization error is generally small.

For simplicity, we assume the new gate values are renormalized directly from the old ones (without retaking the softmax):
\[
g_{k}'(x) = \frac{g_{k}(x)}{1 - \sum\limits_{j \in P\cap T(x)} g_{j}(x) + \sum\limits_{i \in R(x)} g_{i}'(x)}
\]

Substituting into the renormalization error:
\[
\sum\limits_{k \in T(x) \setminus P} (g_{k}(x) - g_{k}'(x)) f_{k}(x) = \frac{\sum\limits_{i \in R(x)} g_{i}'(x) - \sum\limits_{j \in P\cap T(x)} g_{j}(x)}{\sum\limits_{i \in R(x)} g_{i}'(x) - \sum\limits_{j \in P\cap T(x)} g_{j}(x) + 1} \sum\limits_{k \in T(x) \setminus P} g_{k}(x) f_{k}(x)
\]

Note that the scalar factor in front is small when $\sum_{j \in P\cap T(x)} g_{j}(x) \approx \sum_{i \in R(x)} g_{i}'(x)$. Generally, this is expected because we typically prune experts with the lower gate probabilities among the top-$K$, and replace them with other experts whose gate probabilities barely missed the cut-off for top-$K$. We therefore focus on minimizing the substitution error.

\subsection{Expanding the squared substitution error}

We focus on minimizing the \textit{squared} substitution error, as this reveals interaction terms between pruned experts. We decompose the quadratic form into three components:

\begin{align*}
&\biggl\Vert\sum_{j \in P \cap T(x)} g_{j}(x) f_{j}(x) - \sum_{i \in R(x)} g_{i}'(x) f_{i}(x)\biggr\Vert_{2}^{2} \\
&= \underbrace{\biggl\Vert\sum_{j \in P \cap T(x)} g_{j}(x) f_{j}(x)\biggr\Vert_{2}^{2}}_{A} - \underbrace{2\biggl\langle \sum_{j \in P \cap T(x)} g_{j}(x) f_{j}(x),\; \sum_{i \in R(x)} g_{i}'(x) f_{i}(x)\biggr\rangle}_{B} + \underbrace{\biggl\Vert\sum_{i \in R(x)} g_{i}'(x) f_{i}(x)\biggr\Vert_{2}^{2}}_{C}
\end{align*}

For ease of notation, we split the substitution error into three components: $A + B + C$.

Intuitively, $A$ is the contribution of the pruned experts. Note $A$ is quadratic, and the cross-terms of $A$ captures how much pruned experts reinforce each other. If two pruned experts contribute cooperatively, their joint removal incurs a penalty which is worse than their individual removals.

$C$ is the contribution of the replacement experts, and it measures the contribution of these experts to the output. $C$ also contributes to error because the replacements are not perfectly aligned with what was pruned.

Finally, $B$ is the alignment between pruned experts and retained experts. If replacement experts are well-aligned with the pruned experts, $B$ is large in magnitude and contributes a large negative to the substitution error.

\subsection{Upper-bounding the substitution error via $Z$}

By the Cauchy--Schwarz inequality, we have that $B \leq \vert B\vert\leq 2\sqrt{A}\sqrt{C}$. Substituting, we have the following inequality:
\[
\bigl\Vert\sum_{j \in P \cap T(x)} g_{j}(x) f_{j}(x) - \sum_{i \in R(x)} g_{i}'(x) f_{i}(x)\bigr\Vert_{2}^{2} = A + B + C \leq A + 2\sqrt{A}\sqrt{C} + C = \bigl(\sqrt{A} + \sqrt{C}\bigr)^{2}
\]

We now upper-bound both $\sqrt{A}$ and $\sqrt{C}$ in terms of a common quantity. Define:
\[
Z = \bigl(\sum_{j \in P\cap T(x)} g_{j}(x) \|f_{j}(x)\|_{2}\bigr)^{2}
\]

\textbf{Bounding} $\sqrt{A}$:

By the triangle inequality:
\[
\sqrt{A} = \bigl\Vert\sum_{j \in P \cap T(x)} g_{j}(x) f_{j}(x)\bigr\Vert_{2} \leq \sum_{j \in P \cap T(x)} g_{j}(x) \Vert f_{j}(x)\Vert_{2} = \sqrt{Z}
\]

\textbf{Bounding} $\sqrt{C}$:

Again by the triangle inequality:
\[
\sqrt{C} = \bigl\Vert\sum_{i \in R(x)} g_{i}'(x) f_{i}(x)\bigr\Vert_{2} \leq \sum_{i \in R(x)} g_{i}'(x) \Vert f_{i}(x)\Vert_{2}
\]

We have $\sum\limits_{i \in R(x)} g_{i}'(x) \Vert f_{i}(x)\Vert_{2} \leq \max_{k} \Vert f_{k}(x)\Vert_{2}\sum\limits_{i \in R(x)} g_{i}'(x)$, by replacing the replacement experts' output norms with the maximum norm over all experts in the layer.

By Lemma~\ref{lem:gate-mass}, $\max_{k} \Vert f_{k}(x)\Vert_{2}\sum\limits_{i \in R(x)} g_{i}'(x) \leq \max_{k}\Vert f_{k}(x)\Vert_{2}\sum\limits_{j\in P\cap T(x)}g_{j}(x)$.

Furthermore, for each pruned expert $j$, we also have that $g_{j}(x) \leq g_{j}(x)\frac{\Vert f_{j}(x)\Vert_{2}}{\min_{k}\Vert f_{k}(x)\Vert_{2}}$. Substituting in for $g_{j}(x)$, we have:
\begin{align*}
\max_{k}\Vert f_{k}(x)\Vert_{2}\sum\limits_{j\in P\cap T(x)}g_{j}(x) &\leq \max_{k}\Vert f_{k}(x)\Vert_{2}\sum\limits_{j\in P\cap T(x)}g_{j}(x)\frac{\Vert f_{j}(x)\Vert_{2}}{\min_{k}\Vert f_{k}(x)\Vert_{2}}\\
&= \frac{\max_{k} \Vert f_{k}(x)\Vert_{2}}{\min_{k} \Vert f_{k}(x)\Vert_{2}}\sqrt{Z}
\end{align*}

This substitution provides a loose (but strict) upper bound, without relying on knowledge of the replacement experts, which makes it tractable to optimize. Thus, $\sqrt{C} \leq \rho\sqrt{Z}$, where $\rho = \frac{\max_{k} \Vert f_{k}(x)\Vert_{2}}{\min_{k} \Vert f_{k}(x)\Vert_{2}}$.

\paragraph{Together}:
\[
\bigl\Vert\sum_{j \in P \cap T(x)} g_{j}(x) f_{j}(x) - \sum_{i \in R(x)} g_{i}'(x) f_{i}(x)\bigr\Vert_{2}^{2} \leq (\sqrt{A} + \sqrt{C})^{2} \leq (\sqrt{Z} + \rho\sqrt{Z})^{2} = (1 + \rho)^{2} \cdot Z
\]

This completes the proof of Theorem~\ref{thm:z_bound}.

\subsection{Minimizing $Z$ as a quadratic program}

Our goal is now to minimize $\mathop{\mathbb{E}}\limits_{x \sim \mathcal{D}}[Z]$ over the calibration set $\mathcal{D}$. We introduce binary decision variables $p_{k} \in \{0,1\}$, where $p_{k} = 1$ iff expert $k \in P$, and indicator variables $\mathbf{1}[k \in T(x)]$ which denote if expert $k$ is in the top-$K$.

Substituting:
\begin{align*}
Z &= \bigl(\sum\limits_{j=1}^{E} p_{j} \cdot \mathbf{1}[j \in T(x)] \cdot g_{j}(x) \Vert f_{j}(x)\Vert_{2}\bigr)^{2} \\
&= \sum\limits_{j=1}^{E} \sum\limits_{k=1}^{E} p_{j} p_{k} \cdot \mathbf{1}[j \in T(x)] \cdot \mathbf{1}[k \in T(x)] \cdot g_{j}(x) g_{k}(x) \Vert f_{j}(x)\Vert_{2} \Vert f_{k}(x)\Vert_{2}.
\end{align*}

Taking the empirical expectation over the calibration set (with $N$ tokens):
\begin{align*}
\mathop{\mathbb{E}}\limits_{x}[Z] &= \frac{1}{N} \sum\limits_{x} \sum\limits_{j=1}^{E} \sum\limits_{k=1}^{E} p_{j} p_{k} \cdot \mathbf{1}[j \in T(x)] \cdot \mathbf{1}[k \in T(x)] \cdot g_{j}(x) g_{k}(x) \Vert f_{j}(x)\Vert_{2} \Vert f_{k}(x)\Vert_{2} \\
&= \sum\limits_{j=1}^{E} \sum\limits_{k=1}^{E} p_{j} p_{k} \underbrace{\frac{1}{N} \sum\limits_{x\in \mathcal{X}_{j,k}} g_{j}(x) g_{k}(x) \Vert f_{j}(x)\Vert_{2} \Vert f_{k}(x)\Vert_{2}}_{F_{j,k}} \\
&= p^{\top} F p
\end{align*}
where we define:
\[
F_{i,j} = \frac{1}{N}\sum\limits_{x \in \mathcal{X}_{i,j}} g_{i}(x) g_{j}(x) \Vert f_{i}(x)\Vert_{2} \Vert f_{j}(x)\Vert_{2}, \quad \mathcal{X}_{i,j} = \{x : i, j \in T(x)\}
\]
and $p$ is a binary vector of size $E$.

Note that in practice, we use \textit{conditional} normalization in $F$ instead of unconditional normalization. This equates to the following definition of $F$:
\[
F_{i,j} = \frac{1}{\vert\mathcal{X}_{i,j}\vert}\sum\limits_{x \in \mathcal{X}_{i,j}} g_{i}(x) g_{j}(x) \Vert f_{i}(x)\Vert_{2} \Vert f_{j}(x)\Vert_{2}
\]

This is the empirical choice which we found helps decouple interaction strength from co-selection frequency. This helps ensure that specialist expert pairs which are rarely co-selected---but contribute strongly---are not undervalued, consistent with the analogous choice made by REAP.

Note that if an expert pair is never activated ($\mathcal{X}_{i,j} = \emptyset$), then $F_{i,j} = 0$.

Therefore, finding the optimal prune-set of size $\vert P\vert$ reduces to the binary quadratic program:
\[
P^{*} = \argmin_{p \in \{0,1\}^{E},\; \sum\limits_{k} p_{k} = \vert P\vert}\; p^{\top} F p
\]

This completes the proof of Theorem~\ref{thm:qp}.

\clearpage

\section{Supplementary Figures and Tables}

\begin{figure}[h]
\centering
\includegraphics[width=\columnwidth]{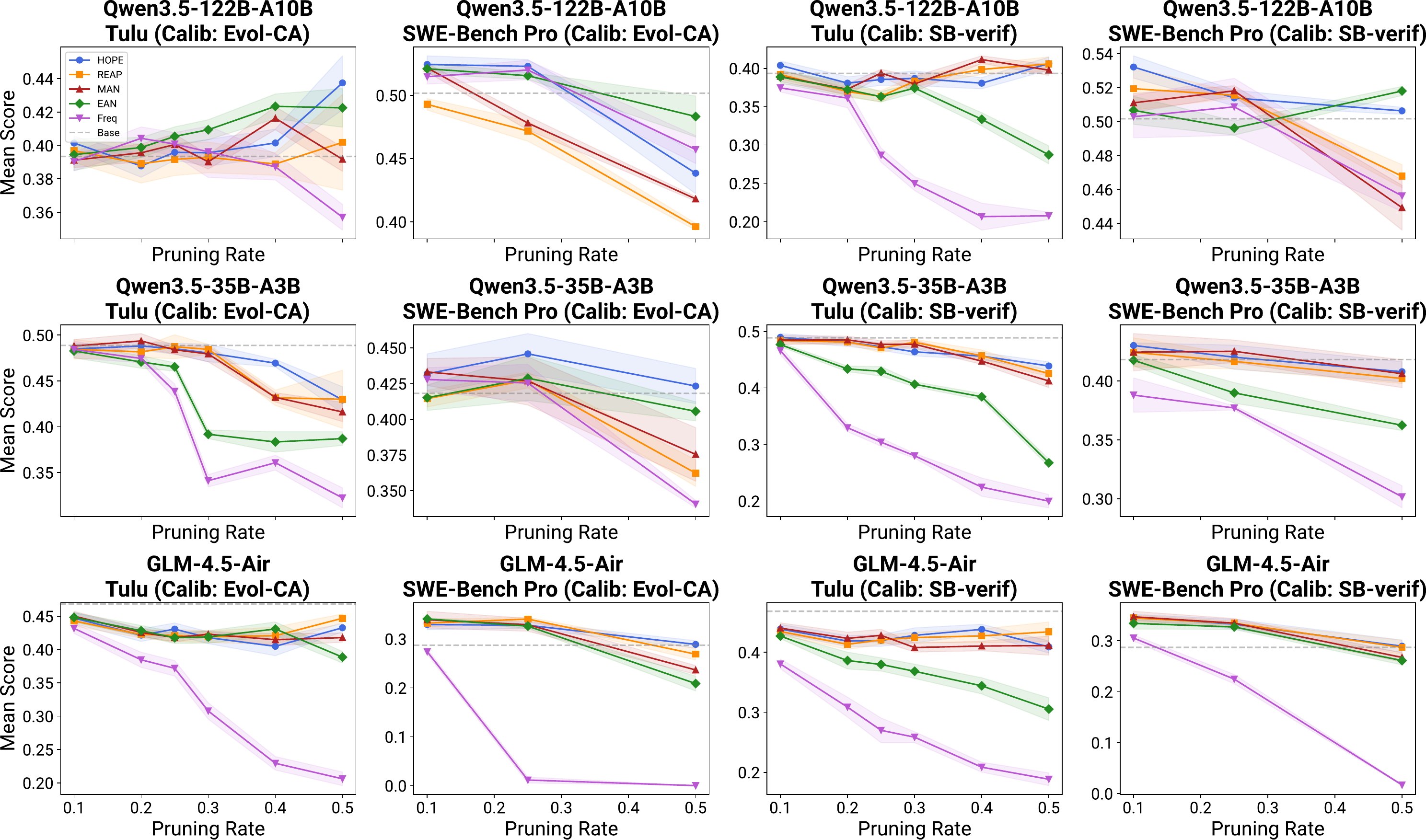}
\caption{\small Full performance results across all models and calibration sets. Each row corresponds to a model architecture (Qwen3.5-122B, Qwen3.5-35B, GLM-4.5-Air). Columns alternate between mean Tulu score and SWE-Bench Pro score for each calibration set (Evol-CodeAlpaca and SWE-Bench verified). The dashed gray line indicates the performance of the unpruned base model. HOPE (blue) is competitive or best at all pruning rates, with the cleanest separation at aggressive pruning rates.}
\label{fig:S1perf_full}
\end{figure}

\begin{table}[h]
\centering
\resizebox{\textwidth}{!}{

}
\caption{\small Full Tulu results across all calibration sets and pruning rates for Qwen3.5-35B-A3B. Scores are multiplied by 100 (mean and standard deviation across 3 independent calibration trials). The best method (per condition) for each column is bolded.}
\label{tab:s1tulu_qwen3.5-35b}
\end{table}

\begin{table}[h]
\centering
\resizebox{\textwidth}{!}{
%
}
\caption{\small Full Tulu results across all calibration sets and pruning rates for Qwen3.5-122B-A10B. Scores are multiplied by 100 (mean and standard deviation across 3 independent calibration trials). The best method (per condition) for each column is bolded.}
\label{tab:s1tulu_qwen3.5-122b}
\end{table}

\begin{table}[h]
\centering
\resizebox{\textwidth}{!}{
%
}
\caption{\small Full Tulu results across all calibration sets and pruning rates for GLM-4.5-Air. Scores are multiplied by 100 (mean and standard deviation across 3 independent calibration trials). The best method (per condition) for each column is bolded.}
\label{tab:s1tulu_glm4.5}
\end{table}

\begin{table}[h]
\centering
%
\caption{\small Full SWE-Bench Pro results across all calibration sets and pruning rates for Qwen3.5-35B-A3B. Scores are multiplied by 100 (mean and standard deviation across 3 independent calibration trials). The best method (per condition) for each column is bolded.}
\label{tab:s1swebenchpro_qwen3.5-35b}
\end{table}

\begin{table}[h]
\centering
%
\caption{\small Full SWE-Bench Pro results across all calibration sets and pruning rates for Qwen3.5-122B-A10B. Scores are multiplied by 100 (mean and standard deviation across 3 independent calibration trials). The best method (per condition) for each column is bolded.}
\label{tab:s1swebenchpro_qwen3.5-122b}
\end{table}

\begin{table}[h]
\centering
%
\caption{\small Full SWE-Bench Pro results across all calibration sets and pruning rates for GLM-4.5-Air. Scores are multiplied by 100 (mean and standard deviation across 3 independent calibration trials). The best method (per condition) for each column is bolded.}
\label{tab:s1swebenchpro_glm4.5}
\end{table}

\begin{figure}[h]
\centering
\includegraphics[width=\columnwidth]{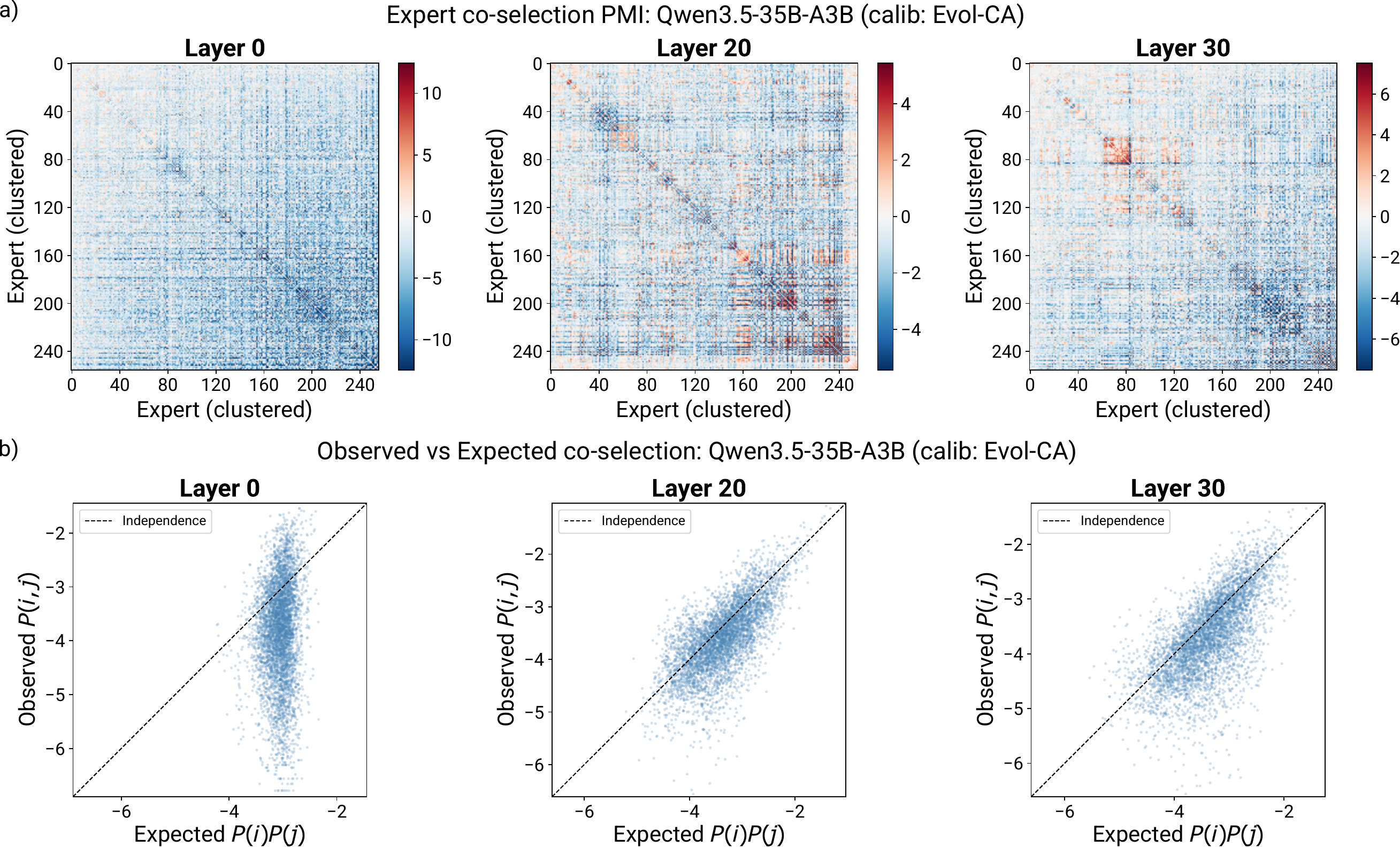}
\caption{\small Expert co-selection is highly non-random, which motivates interaction-aware pruning. \textbf{a)} Pointwise mutual information (PMI) between expert pairs. PMI is defined between two experts $i,j$ (within a layer) as $PMI(i,j)=log_{2}\frac{P(i,j)}{P(i)P(j)}$, where $P(i,j)$ is the fraction of tokens which activate both experts, and $P(i)$ is the fraction of tokens which activate only one expert $i$. We show PMI between experts at three different layers of Qwen3.5-35B-A3B. Positive values (red) indicate experts are co-selected more often than expected due to independent chance; negative values (blue) indicate experts are co-selected less often (mutual exclusion). \textbf{b)} Observed co-selection probability $P(i,j)$ versus independence expectation $P(i)P(j)$. Under completely independent routing, all points should lie on the diagonal. Any substantial deviation (which is especially pronounced at layer 0) demonstrates that experts operate in cooperative groups.}
\label{fig:S2pmi}
\end{figure}

\begin{figure}[h]
\centering
\includegraphics[width=\columnwidth]{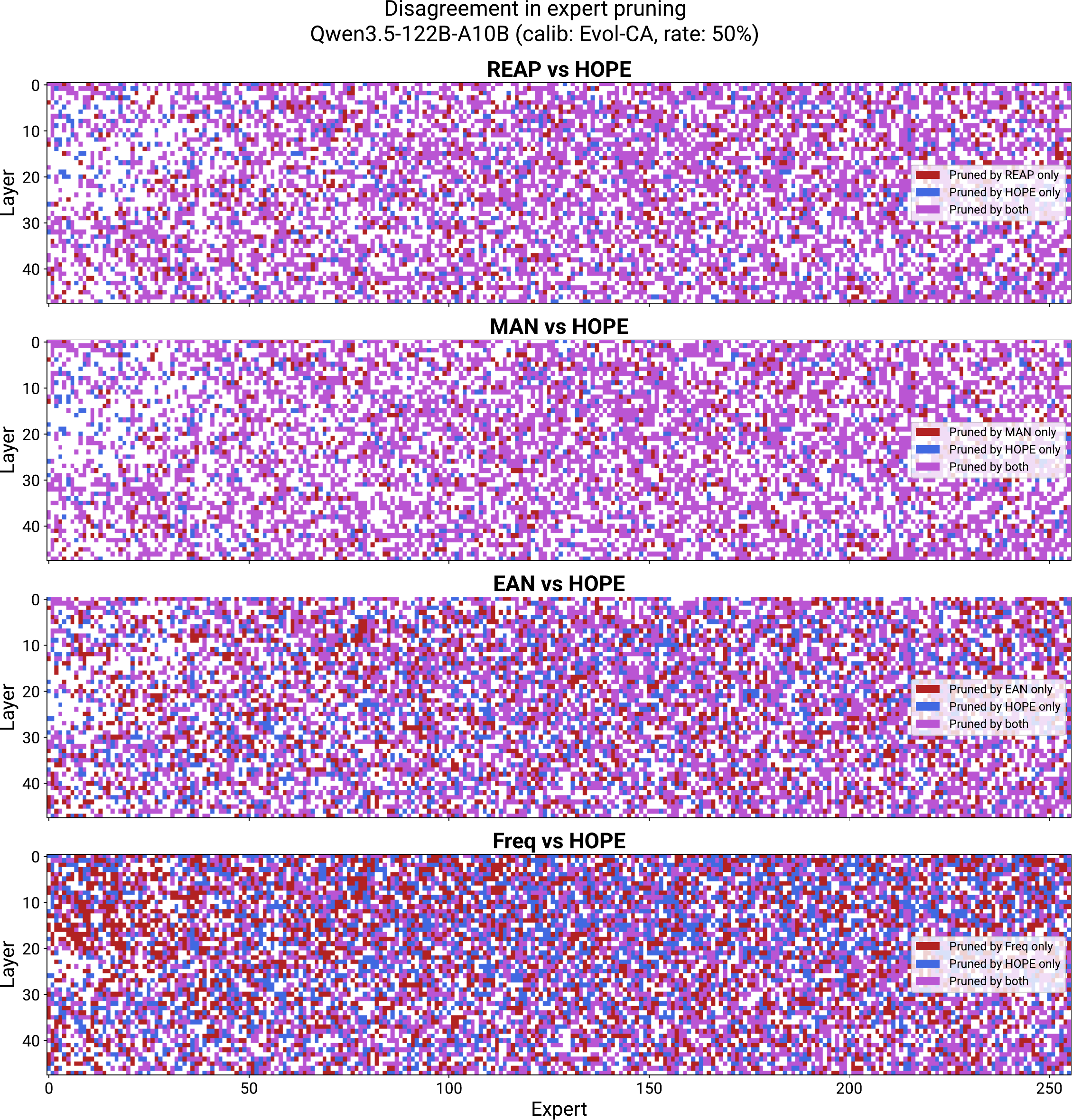}
\caption{\small Spatial visualization of disagreement in expert pruning between HOPE and each baseline (Qwen3.5-122B-A10B). Each panel shows an $L\times E$ grid. Disagreement is typically distributed across all layers and experts rather than purely concentrated in certain regions.}
\label{fig:S3blacklist_overlap_grid}
\end{figure}

\begin{table}[h]
\centering
\begin{tabular}{llcccc}
\toprule
 & & \multicolumn{2}{c}{Calibration} & & \\
\cmidrule(lr){3-4}
Model & Method & Time/token (ms) & 1k prompts (hr) & QP solve (s) & Ratio \\
\midrule
\multirow{2}{*}{\shortstack{Qwen-35B\\($L=40,E=256$)}} & HOPE & 2.68 & 1.27 & 75.4 & \multirow{2}{*}{$1.077\times$} \\
 & REAP & 2.53 & 1.20 & N/A & \\
\midrule
\multirow{2}{*}{\shortstack{Qwen-122B\\($L=48,E=256$)}} & HOPE & 3.47 & 1.55 & 92.4 & \multirow{2}{*}{$1.067\times$} \\
 & REAP & 3.31 & 1.48 & N/A & \\
\midrule
\bottomrule
\end{tabular}
\caption{\small Computational cost of HOPE vs REAP. Calibration time is given in milliseconds per token, and in hours per 1000 prompts (given a random sample of Evol-CodeAlpaca). QP solve time is the time taken to solve the HOPE-specific quadratic program over all layers total (a one-time cost). Ratio is the total amount of time taken for HOPE to make its pruning decisions (including QP solve time) over the time taken for REAP.}
\label{tab:S2timing}
\end{table}

\begin{figure}[h]
\centering
\includegraphics[width=0.4\columnwidth]{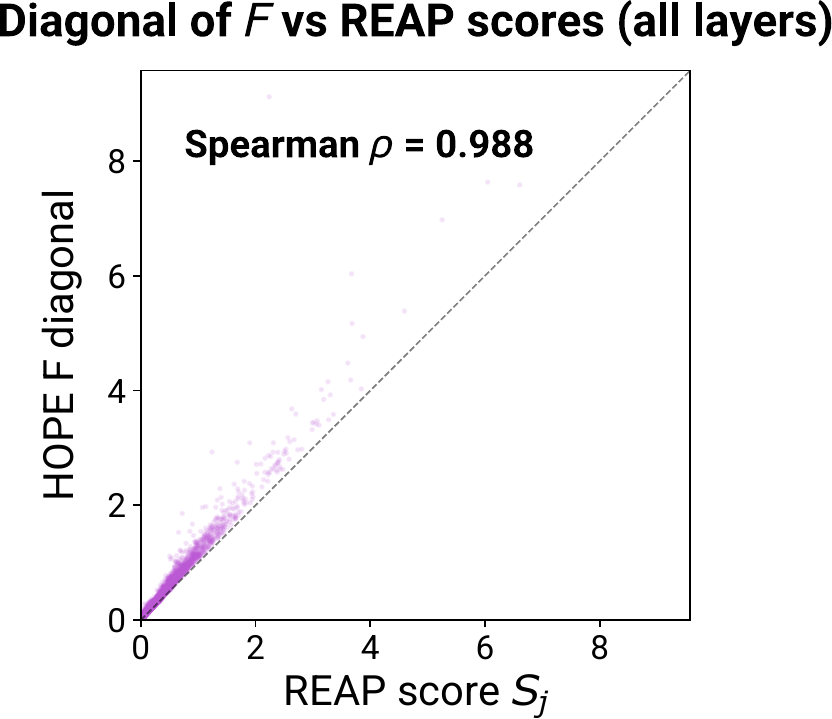}
\caption{\small The diagonal of the interaction matrix $F$ recovers REAP’s expert rankings. Each point represents one expert (pooled across layers of Qwen3.5-122B-A10B, calib: Evol-CA). The x-axis is the REAP score $S^{\text{REAP}}_{k}$ and the y-axis is $\sqrt{F_{k,k}}$. The Spearman correlation is near perfect ($\rho = 0.988$), confirming that REAP is a special case of HOPE.}
\label{fig:S4fdiag_vs_reap}
\end{figure}

\begin{figure}[h]
\centering
\includegraphics[width=\columnwidth]{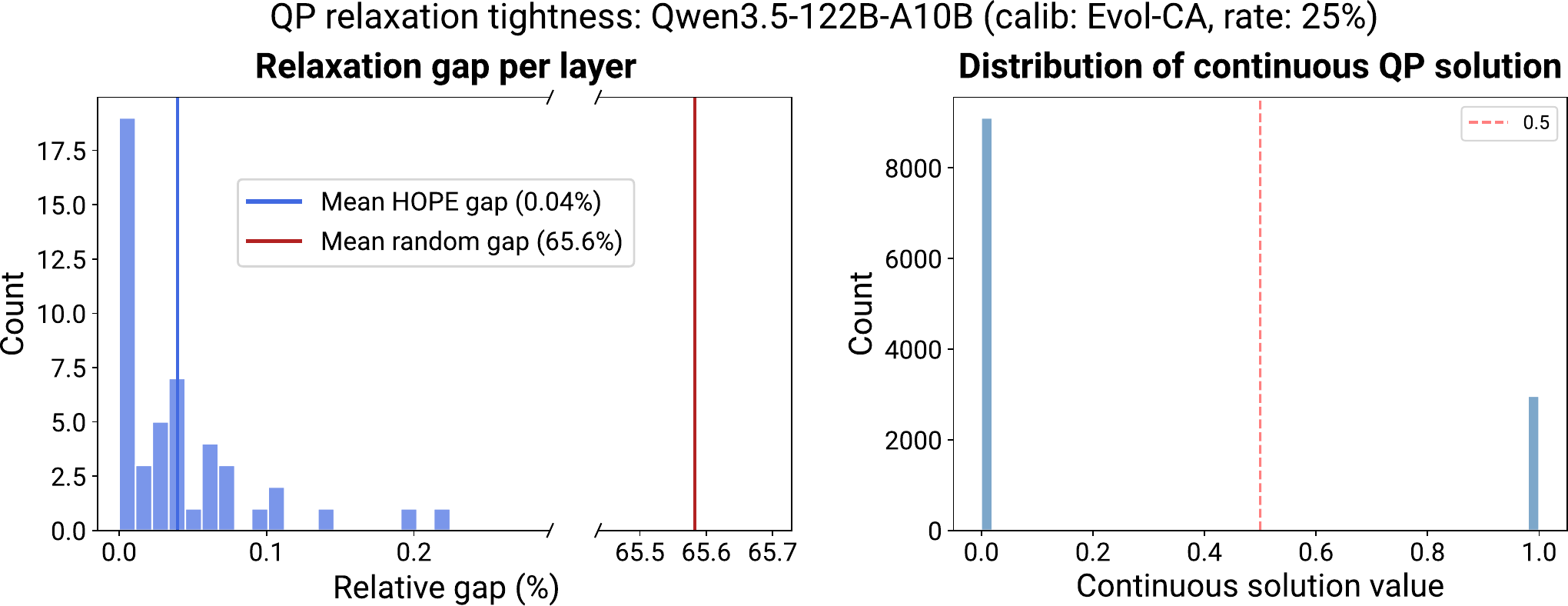}
\caption{\small HOPE’s continuous relaxation to solve the quadratic program is tight. \textbf{a)} Histogram of the relative gap in the binary objective value versus the continuous objective value (measured as $\frac{\text{obj}_{bin} - \text{obj}_{cont}}{\text{obj}_{bin}}$) per layer. The mean gap is only 0.04\%, compared to an expected gap of 65.6\% for randomly sampled feasible binary solutions (red line). \textbf{b)} Distribution of continuous solution values in $p$ across all layers and experts. The overwhelming concentration of continuous solutions near 0 and 1 confirms that the relaxation produces effectively binary solutions, thereby rounding is nearly lossless.}
\label{fig:S5qp_relaxation}
\end{figure}

\begin{figure}[h]
\centering
\includegraphics[width=0.6\columnwidth]{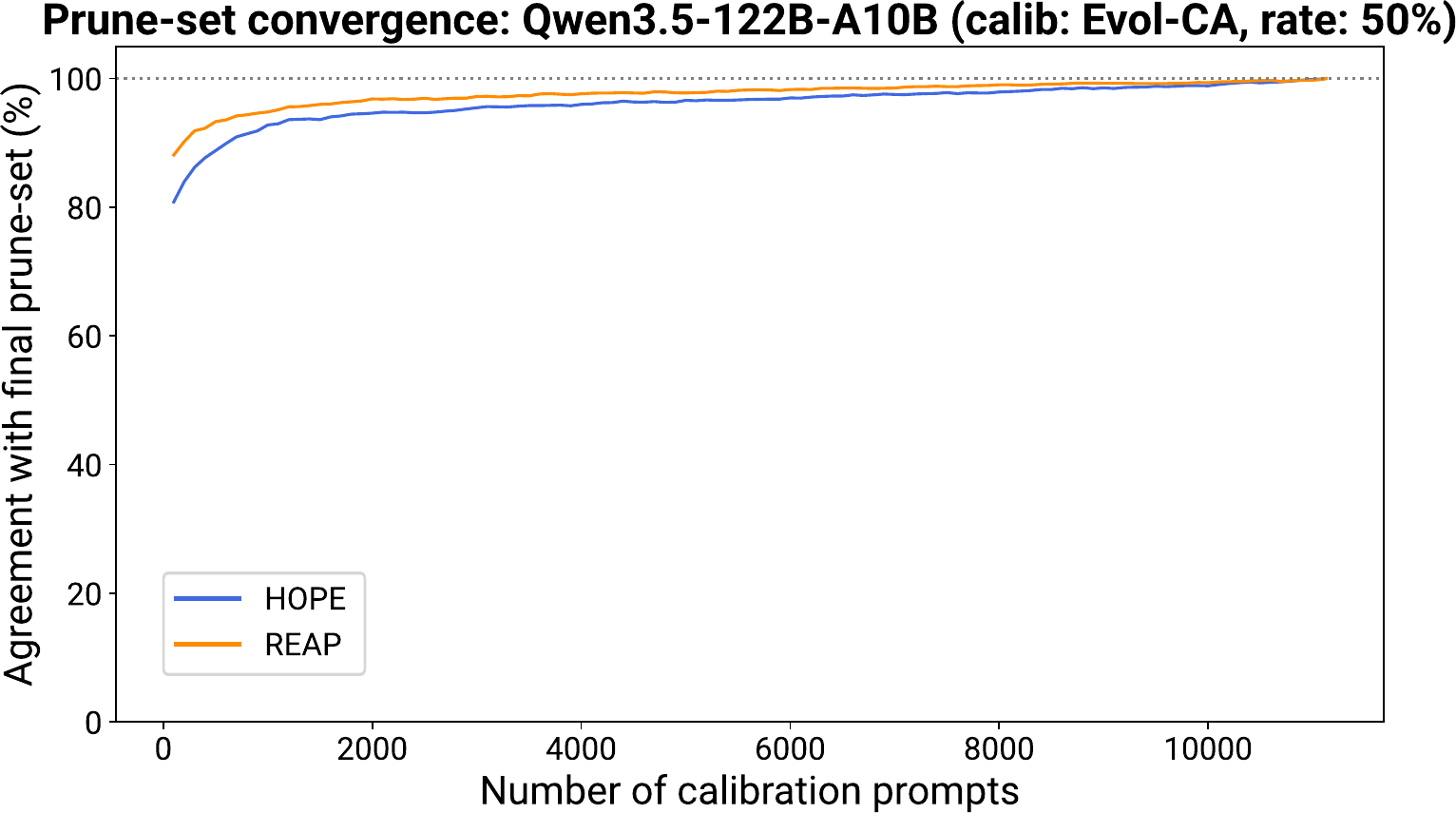}
\caption{\small Convergence of pruning decisions with calibration data. We measure the fraction of pruning decisions that agree with the final pruning set (computed from the entire calibration set) as a function of the number of calibration prompts processed. Both HOPE and REAP stabilize within 10k prompts, with HOPE requiring marginally more data to converge due to estimating pairwise (rather than per-expert) statistics. Convergence is shown for 50\% pruning on Qwen3.5-35B-A3B with the Evol-CodeAlpaca calibration set.}
\label{fig:S6f_convergence}
\end{figure}

\clearpage

\section{Cross-layer HOPE (CHOPE)}
\label{app:chope}

In this section, we present an extension of HOPE which accounts for expert interactions across different layers, which enables non-uniform layer-wise pruning budgets (i.e. optimizing for a different number of pruned experts in each layer). We call this extension CHOPE (\textbf{C}ross-layer \textbf{H}igher-\textbf{O}rder \textbf{P}runing of \textbf{E}xperts).

\subsection{Setup}

Consider an $L$-layer MoE transformer, where each layer $\ell$ consists of attention followed by an MoE layer, each with residual connections:
\begin{align*}
  z^{\ell} &= x^{\ell-1} + \text{Att}^{\ell}(x^{\ell-1}) \\
  x^{\ell} &= z^{\ell} + \text{MoE}^{\ell}(z^{\ell})
\end{align*}
where $x^{\ell-1} \in \mathbb{R}^{d}$ is the residual stream entering layer $\ell$, and $z^{\ell} \in \mathbb{R}^{d}$ is the post-attention representation fed to the MoE. Layer norms are absorbed into the respective $\text{Att},\text{MoE}$ functions. Note that in this derivation, we assume each layer consists of attention followed by an MoE block, but the attention can be equivalently replaced by other architectures (e.g. an SSM).

A forward pass computes $x^{0}, z^{1}, x^{1}, \ldots, x^{L}$ in sequence, where $x^{0}$ is the input token embedding and $x^{L}$ is fed to the language-model head.

At each layer $\ell$, the MoE contains $E$ experts with output $\text{MoE}^{\ell}(z^{\ell}) = \sum\limits_{k=1}^{E} g_{k}^{\ell}(z^{\ell})\,f_{k}^{\ell}(z^{\ell})$, using the same notation as in Appendix~\ref{app:hope_deriv}. We use top-$K$ routing with selected set $T^{\ell}(z^{\ell}) \subseteq \{1,\ldots,E\}$.

At each layer $\ell$, we prune a set of experts $P^{\ell} \subseteq \{1,\ldots,E\}$. Upon pruning, for any token, the pruned experts $P^{\ell} \cap T^{\ell}(z^{\ell})$ are replaced by the next-highest-logit experts $R^{\ell}(z^{\ell})$, with $\vert R^{\ell}(z^{\ell})\vert = \vert P^{\ell} \cap T^{\ell}(z^{\ell})\vert$. The pruned MoE output uses new gate weights ${g'}_{k}^{\ell}(z^{\ell})$. Note that this is the same setup as in Appendix~\ref{app:hope_deriv}, tracking layer indices.

\subsection{Error propagation}

When we prune experts, the effects of pruning propagate through the network from early to later layers. After pruning experts from each layer, the forward pass computes perturbed quantities $\tilde{x}^{0}, \tilde{z}^{1}, \tilde{x}^{1}, \ldots, \tilde{x}^{L}$ (with $\tilde{x}^{0} = x^{0}$). Define the output \textit{error} at layer $\ell$ as $\epsilon^{\ell} = \tilde{x}^{\ell} - x^{\ell}$. Here, we will quantify the error $\epsilon^{\ell}$ in each layer.

\textbf{Error through attention}: At layer $\ell$, the perturbed post-attention representation is $\tilde{z}^{\ell} = \tilde{x}^{\ell-1} + \text{Att}^{\ell}(\tilde{x}^{\ell-1})$. The error in the output of this attention layer is:
\[
\tilde{z}^{\ell} - z^{\ell} = (\tilde{x}^{\ell-1} - x^{\ell-1}) + (\text{Att}^{\ell}(\tilde{x}^{\ell-1}) - \text{Att}^{\ell}(x^{\ell-1}))
\]

Since no pruning/modification happens at the attention block, its error $\tilde{z}^{\ell} - z^{\ell}$ arises purely from error in the input being propagated through.

To quantify $\tilde{z}^{\ell} - z^{\ell}$, we perform a first-order Taylor expansion of $\text{Att}^{\ell}$ around $x^{\ell-1}$:
\[
\text{Att}^{\ell}(\tilde{x}^{\ell-1}) \approx \text{Att}^{\ell}(x^{\ell-1}) + J_{A}^{\ell}(\tilde{x}^{\ell-1} - x^{\ell-1})
\]
where $J_{A}^{\ell} = \frac{\partial\,\text{Att}^{\ell}(x)}{\partial x}\big\vert_{x = x^{\ell-1}} \in \mathbb{R}^{d \times d}$ is the Jacobian of the attention output with respect to its input, evaluated at the unperturbed input.

Substituting, we have:
\[
\tilde{z}^{\ell} - z^{\ell} \approx (I + J_{A}^{\ell})\,\epsilon^{\ell-1}
\]

\textbf{Error through MoE}: Similar to error propagation through attention, the error in the output of the MoE is:
\[
\tilde{x}^{\ell} - x^{\ell} = (\tilde{z}^{\ell} - z^{\ell}) + (\widetilde{\text{MoE}}^{\ell}(\tilde{z}^{\ell}) - \text{MoE}^{\ell}(z^{\ell}))
\]
where $\widetilde{\text{MoE}}^{\ell}$ is the MoE block after pruning. Unlike the attention block, the error in the output of the MoE block arises from both propagated error \textit{and} the error accrued from pruning experts within the block.

Let $\Delta^{\ell}(\tilde{z}^{\ell}) = \widetilde{\text{MoE}}^{\ell}(\tilde{z}^{\ell}) - \text{MoE}^{\ell}(\tilde{z}^{\ell})$ be the error resulting from pruning at layer $\ell$ (the difference in MoE outputs before and after pruning, applied to the perturbed input). Substituting, this gives us:
\[
\tilde{x}^{\ell} - x^{\ell} = (\tilde{z}^{\ell} - z^{\ell}) + (\text{MoE}^{\ell}(\tilde{z}^{\ell}) - \text{MoE}^{\ell}(z^{\ell})) + \Delta^{\ell}(\tilde{z}^{\ell})
\]

We perform a first-order Taylor approximation of the unpruned MoE around $z^{\ell}$ (like with attention), yielding:
\[
\epsilon^{\ell} = \tilde{x}^{\ell} - x^{\ell} \approx (I + J_{M}^{\ell})\,(\tilde{z}^{\ell} - z^{\ell}) + \Delta^{\ell}(\tilde{z}^{\ell})
\]
where $J_{M}^{\ell} = \frac{\partial\,\text{MoE}^{\ell}(z)}{\partial z}\big\vert_{z = z^{\ell}}$ is the MoE Jacobian (evaluated at the unperturbed input).

\textbf{Closed form of the error}: Substituting in the attention error $\tilde{z}^{\ell} - z^{\ell}$ into the MoE error $\epsilon^{\ell}$:
\[
\epsilon^{\ell} = (I + J_{M}^{\ell})(I + J_{A}^{\ell})\,\epsilon^{\ell-1} + \Delta^{\ell}(\tilde{z}^{\ell})
\]
with $\epsilon^{0} = 0$.

This recurrence has the following closed form:
\[
\epsilon^{L} = \sum\limits_{j=1}^{L} \Phi_{L,j}\,\Delta^{j}(\tilde{z}^{j})
\]
where $\Phi_{\ell,j}$ is the \textit{error propagation operator}, defined as follows:
\[
\Phi_{\ell,j} = \prod\limits_{m=j+1}^{\ell} (I + J_{M}^{m})(I + J_{A}^{m}), \quad\text{with}\quad \Phi_{\ell,\ell} = I
\]
$\Phi_{\ell,j}$ quantifies how error introduced at layer $j$ is amplified or attenuated as it propagates to the output at layer $\ell$.

Note that $\epsilon^{L}$ is the overall error of the last layer of the network, which includes error arising from expert pruning in all MoE layers, as well as the error propagating from earlier to later layers.

\textbf{Decoupling error from $\tilde{z}^{j}$}: Computing $\Delta^{j}(\tilde{z}^{j})$ exactly requires knowing the perturbed input $\tilde{z}^{j}$, which itself depends on all previous pruning decisions. This makes optimizing this error over prune-sets intractable. Instead, we approximate $\Delta^{j}(\tilde{z}^{j}) \approx \Delta^{j}(z^{j})$. That is, the pruning error at layer $j$ is well-approximated by the error computed on the clean (unpruned) input. This decouples the per-layer pruning errors from the propagation operators:
\[
\epsilon^{L} \approx \sum\limits_{j=1}^{L} \Phi_{L,j}\,\Delta^{j}(z^{j})
\]

\subsection{Upper-bounding the substitution error}

Here, we will bound the substitution-error component of $\epsilon^{L}$. The proof follows analogously to that in Appendix~\ref{app:hope_deriv}.

As with per-layer HOPE (Appendix~\ref{app:hope_deriv}), we decompose $\Delta^{j}(z^{j})$ into a substitution error and a renormalization error:
\begin{align*}
\Delta^{j}(z^{j}) &= \left[ \sum\limits_{i\in R^{j}(z^{j})}g_{i}^{'j}(z^{j})f_{i}^{j}(z^{j}) - \sum\limits_{k\in T^{j}(z^{j})\cap
P^{j}}g_{k}^{j}(z^{j})f_{k}^{j}(z^{j})\right] \\
&\quad+ \left[\sum\limits_{k\in T^{j}(z^{j}) \setminus P^{j}}(g_{k}^{'j}(z^{j})- g_{k}^{j}(z^{j}))f_{k}^{j}(z^{j})\right]
\end{align*}

Substituting this decomposition into $\epsilon^{L}$:
\begin{align*}
\epsilon^{L} &= \underbrace{\sum\limits_{j=1}^{L}\Phi^{L,j} \left[ \sum\limits_{i\in R^{j}(z^{j})}g_{i}^{'j}(z^{j})f_{i}^{j}(z^{j}) - \sum\limits_{k\in
T^{j}(z^{j})\cap P^{j}}g_{k}^{j}(z^{j})f_{k}^{j}(z^{j})\right]}_{\text{substitution error}} \\
&\quad+ \underbrace{\sum\limits_{j=1}^{L}\Phi^{L,j} \left[\sum\limits_{k\in T^{j}(z^{j}) \setminus P^{j}}(g_{k}^{'j}(z^{j})-
g_{k}^{j}(z^{j}))f_{k}^{j}(z^{j})\right]}_{\text{renormalization error}}
\end{align*}

The renormalization error remains small by the same argument as in Appendix~\ref{app:hope_deriv} (with an additional factor of $\Phi_{L,j}$). As before, we focus on minimizing the squared cross-layer substitution error:
\[
\biggl\Vert \sum\limits_{j=1}^{L} \Phi_{L,j} \biggl(\sum\limits_{k \in T^{j}(z^{j}) \cap P^{j}} g_{k}^{j}(z^{j})\,f_{k}^{j}(z^{j}) - \sum\limits_{i \in R^{j}(z^{j})} {g'}_{i}^{j}(z^{j})\,f_{i}^{j}(z^{j})\biggr) \biggr\Vert_{2}^{2}
\]

As before, we decompose this squared norm into three components ($A + B + C$):
\begin{align*}
A &= \biggl\Vert \sum\limits_{j=1}^{L} \Phi_{L,j} \sum\limits_{k \in T^{j}(z^{j}) \cap P^{j}} g_{k}^{j}(z^{j})\,f_{k}^{j}(z^{j}) \biggr\Vert_{2}^{2}
\\
B &= -2\biggl\langle \sum\limits_{j=1}^{L} \Phi_{L,j} \sum\limits_{k \in T^{j}(z^{j}) \cap P^{j}} g_{k}^{j}(z^{j})\,f_{k}^{j}(z^{j}),\;
\sum\limits_{j=1}^{L} \Phi_{L,j} \sum\limits_{i \in R^{j}(z^{j})} {g'}_{i}^{j}(z^{j})\,f_{i}^{j}(z^{j}) \biggr\rangle\\
C &= \biggl\Vert \sum\limits_{j=1}^{L} \Phi_{L,j} \sum\limits_{i \in R^{j}(z^{j})} {g'}_{i}^{j}(z^{j})\,f_{i}^{j}(z^{j}) \biggr\Vert_{2}^{2}
\end{align*}

By Cauchy--Schwarz, $\vert B\vert \leq 2\sqrt{A}\sqrt{C}$, so $A + B + C \leq (\sqrt{A} + \sqrt{C})^{2}$.

We now upper bound both $\sqrt{A}$ and $\sqrt{C}$ using a common quantity. Define the following:
\[
Z = \bigl(\sum\limits_{j=1}^{L} \sum\limits_{k \in T^{j}(z^{j}) \cap P^{j}} g_{k}^{j}(z^{j}) \,\Vert\Phi_{L,j}\,f_{k}^{j}(z^{j})\Vert_{2}\bigr)^{2}
\]

\textbf{Bounding} $\sqrt{A}$:
By the triangle inequality (applied first over layers, then over experts within each layer):
\[
\sqrt{A} = \biggl\Vert \sum\limits_{j=1}^{L} \Phi_{L,j} \sum\limits_{k \in T^{j}(z^{j}) \cap P^{j}} g_{k}^{j}(z^{j})\,f_{k}^{j}(z^{j}) \biggr\Vert_{2}
\leq \sum\limits_{j=1}^{L} \sum\limits_{k \in T^{j}(z^{j}) \cap P^{j}} g_{k}^{j}(z^{j}) \,\Vert\Phi_{L,j}\,f_{k}^{j}(z^{j})\Vert_{2} = \sqrt{Z}
\]

\textbf{Bounding} $\sqrt{C}$:
By the triangle inequality:
\[
\sqrt{C} = \biggl\Vert \sum\limits_{j=1}^{L} \Phi_{L,j} \sum\limits_{i \in R^{j}(z^{j})} {g'}_{i}^{j}(z^{j})\,f_{i}^{j}(z^{j}) \biggr\Vert_{2} \leq
\sum\limits_{j=1}^{L} \sum\limits_{i \in R^{j}(z^{j})} {g'}_{i}^{j}(z^{j}) \,\Vert\Phi_{L,j}\,f_{i}^{j}(z^{j})\Vert_{2}
\]

We have $\Vert\Phi_{L,j}\,f_{i}^{j}(z^{j})\Vert_{2} \leq \max_{j,k}\Vert\Phi_{L,j}\,f_{k}^{j}(z^{j})\Vert_{2}$ for each replacement expert $i$ in each layer $j$. Therefore:
\[
\sum\limits_{j=1}^{L} \sum\limits_{i \in R^{j}(z^{j})} {g'}_{i}^{j}(z^{j}) \Vert\Phi_{L,j}\,f_{i}^{j}(z^{j})\Vert_{2} \leq \max_{j,k}\Vert\Phi_{L,j}\,f_{k}^{j}(z^{j})\Vert_{2} \sum\limits_{j=1}^{L} \sum\limits_{i \in R^{j}(z^{j})} {g'}_{i}^{j}(z^{j})
\]

By Lemma~\ref{lem:gate-mass} (applied independently at each layer $j$), $\sum_{i \in R^{j}(z^{j})} {g'}_{i}^{j}(z^{j}) \leq \sum_{k \in T^{j}(z^{j}) \cap P^{j}} g_{k}^{j}(z^{j})$. Summing over layers:
\[
\max_{j,k}\Vert\Phi_{L,j}\,f_{k}^{j}(z^{j})\Vert_{2} \sum\limits_{j=1}^{L} \sum\limits_{i \in R^{j}(z^{j})} {g'}_{i}^{j}(z^{j}) \leq \max_{j,k}\Vert\Phi_{L,j}\,f_{k}^{j}(z^{j})\Vert_{2} \sum\limits_{j=1}^{L} \sum\limits_{k \in T^{j}(z^{j}) \cap P^{j}} g_{k}^{j}(z^{j})
\]

Furthermore, for each pruned expert $k$ in layer $j$, we have $g_{k}^{j}(z^{j}) \leq g_{k}^{j}(z^{j})\frac{\Vert\Phi_{L,j}\,f_{k}^{j}(z^{j})\Vert_{2}}{\min_{j,k}\Vert\Phi_{L,j}f_{k}^{j}(z^{j})\Vert_{2}}$. Substituting:
\[
\max_{j,k}\Vert\Phi_{L,j}\,f_{k}^{j}(z^{j})\Vert_{2} \sum\limits_{j=1}^{L} \sum\limits_{k \in T^{j}(z^{j}) \cap P^{j}} g_{k}^{j}(z^{j})
\]
\[
\leq \frac{\max_{j,k}\Vert\Phi_{L,j}\,f_{k}^{j}(z^{j})\Vert_{2}}{\min_{j,k}\Vert\Phi_{L,j}\,f_{k}^{j}(z^{j})\Vert_{2}} \sum\limits_{j=1}^{L} \sum\limits_{k \in
T^{j}(z^{j}) \cap P^{j}} g_{k}^{j}(z^{j})\Vert\Phi_{L,j}\,f_{k}^{j}(z^{j})\Vert_{2} = \rho_{\Phi}\sqrt{Z}
\]

Thus, $\sqrt{C} \leq \rho_{\Phi}\sqrt{Z}$.

\textbf{Together}:
\begin{align*}
\biggl\Vert \sum\limits_{j=1}^{L} \Phi_{L,j} \biggl(\sum\limits_{k \in T^{j}(z^{j}) \cap P^{j}} g_{k}^{j}(z^{j})\,f_{k}^{j}(z^{j}) - \sum\limits_{i \in
R^{j}(z^{j})} {g'}_{i}^{j}(z^{j})\,f_{i}^{j}(z^{j})\biggr) \biggr\Vert_{2}^{2} &\leq (\sqrt{A} + \sqrt{C})^{2} \\
&\leq (\sqrt{Z} + \rho_{\Phi}\sqrt{Z})^{2} \\
&= (1 + \rho_{\Phi})^{2} \cdot Z
\end{align*}

This gives us the following Theorem, analogous to Theorem~\ref{thm:z_bound}:

\begin{theorem}[Cross-layer upper bound]
For any collection of prune-sets $\{P^{1}, \ldots, P^{L}\}$, the squared cross-layer substitution error is bounded by:
\[
(1 + \rho_{\Phi})^{2} \cdot Z \quad\text{where}\quad \rho_{\Phi} = \frac{\max_{j,k} \Vert\Phi_{L,j}\,f_{k}^{j}(z^{j})\Vert_{2}}{\min_{j,k} \Vert\Phi_{L,j}\,f_{k}^{j}(z^{j})\Vert_{2}}
\]
\label{thm:chope_bound}
\end{theorem}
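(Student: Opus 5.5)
The plan is to mirror the single-layer argument behind Theorem~\ref{thm:z_bound}, with each expert output $f_k^j(z^j)$ replaced by its propagated version $\Phi_{L,j} f_k^j(z^j)$. Write the squared cross-layer substitution error as $\Vert a - c\Vert_2^2$. Here $a = \sum_{j}\Phi_{L,j}\sum_{k\in T^j\cap P^j} g_k^j f_k^j$ is the propagated pruned contribution, and $c = \sum_j \Phi_{L,j}\sum_{i\in R^j} {g'}_i^j f_i^j$ is the propagated replacement contribution. Expanding the square gives $A+B+C$ with $A=\Vert a\Vert^2$, $C=\Vert c\Vert^2$ and $B=-2\langle a,c\rangle$. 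Cauchy--Schwarz gives $\vert B\vert\le 2\sqrt{A}\sqrt{C}$, so the error is at most $(\sqrt{A}+\sqrt{C})^2$. The remaining work is to show $\sqrt{A}\le\sqrt{Z}$ and $\sqrt{C}\le\rho_\Phi\sqrt{Z}$.

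For $\sqrt{A}$, I would apply the triangle inequality twice, first across layers and then across pruned experts within a layer. Because each $\Phi_{L,j}$ is linear, it moves inside the inner sum, so each term becomes $g_k^j\Vert\Phi_{L,j}f_k^j\Vert_2$. Summing these terms gives exactly $\sqrt{Z}$. Note that I keep $\Phi_{L,j}$ inside the norm rather than pulling out an operator norm; this is what makes $Z$ match the statement.

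For $\sqrt{C}$, I would use the triangle inequality in the same way to get $\sqrt{C}\le\sum_j\sum_{i\in R^j}{g'}_i^j\Vert\Phi_{L,j}f_i^j\Vert_2$. Next I bound each propagated norm by the global maximum $M=\max_{j,k}\Vert\Phi_{L,j}f_k^j\Vert_2$. Then I invoke Lemma~\ref{lem:gate-mass} layer by layer, which is legitimate because routing, replacement and renormalization happen independently within each layer. This replaces $\sum_{i\in R^j}{g'}_i^j$ by $\sum_{k\in T^j\cap P^j}g_k^j$. Finally, I multiply each $g_k^j$ by $\Vert\Phi_{L,j}f_k^j\Vert_2/m\ge 1$, where $m=\min_{j,k}\Vert\Phi_{L,j}f_k^j\Vert_2$. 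This yields $\sqrt{C}\le (M/m)\sqrt{Z}=\rho_\Phi\sqrt{Z}$. Combining the two bounds gives $(\sqrt{Z}+\rho_\Phi\sqrt{Z})^2=(1+\rho_\Phi)^2 Z$.

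The main obstacle is the $\sqrt{C}$ step, for two reasons.

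\textbf{Index sets of the max and min.} The max must range over all experts that could serve as replacements, meaning all $k$ in each layer, not only selected ones. The min must range over the pruned experts actually appearing in $Z$. Taking both over all $(j,k)$, as $\rho_\Phi$ does, covers both requirements.

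\textbf{Nondegeneracy of the denominator.} The argument needs $m>0$, i.e.\ no propagated expert output vanishes. Just as in Theorem~\ref{thm:z_bound}, I would state this as a standing assumption; otherwise $\rho_\Phi=\infty$ and the bound holds trivially.

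I would also note that the bound concerns the substitution error defined with clean inputs $z^j$. It therefore rests on the decoupling approximation $\Delta^j(\tilde z^j)\approx\Delta^j(z^j)$ and the linearization already adopted. The inequality itself, however, is exact for the quantity as written.
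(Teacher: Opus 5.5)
Your proposal is correct and follows the paper's proof essentially step for step. It uses the same $A+B+C$ expansion with Cauchy--Schwarz, the same two-level triangle inequality with $\Phi_{L,j}$ kept inside the norm to get $\sqrt{A}\le\sqrt{Z}$, and for $\sqrt{C}$ the same global maximum, Lemma~\ref{lem:gate-mass} applied layer by layer, and multiplication by $\Vert\Phi_{L,j}f_k^j(z^j)\Vert_2/m\ge 1$.

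One small correction to your aside: when $m=0$ the bound does not hold ``trivially'' in general. Suppose the only pruned-and-selected experts have $\Phi_{L,j}f_k^j(z^j)=0$. Then $Z=0$, while the error reduces to $\Vert\sum_j\Phi_{L,j}\sum_{i\in R^j}{g'}_i^j f_i^j\Vert_2^2$, which can be positive. So $m>0$ must genuinely be assumed, as the paper implicitly does by defining $\rho_\Phi$ as a ratio.
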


Note that $\rho_{\Phi}$ generalizes the per-layer $\rho$ from Theorem~\ref{thm:z_bound}: it accounts for how $\Phi_{L,j}$ differentially amplifies expert contributions from different layers. Because the ratio is independent of the prune-sets $\{P^{1},\ldots,P^{L}\}$, we can simply minimize $Z$.

\subsection{Minimizing $Z$ as a quadratic program}

Our goal is to minimize $\mathop{\mathbb{E}}\limits_{x \sim \mathcal{D}}[Z]$ over all prune-sets $\{P^{1}, \ldots, P^{L}\}$ jointly. We introduce binary decision variables $p_{k}^{j} \in \{0,1\}$, where $p_{k}^{j} = 1$ iff expert $k$ in layer $j$ is pruned.

Substituting:
\begin{align*}
Z &= \biggl(\sum\limits_{j=1}^{L} \sum\limits_{k=1}^{E} p_{k}^{j} \cdot \mathbf{1}[k \in T^{j}(z^{j})] \cdot g_{k}^{j}(z^{j})
\,\Vert\Phi_{L,j}\,f_{k}^{j}(z^{j})\Vert_{2}\biggr)^{2} \\
&= \sum\limits_{j=1}^{L} \sum\limits_{\ell=1}^{L} \sum\limits_{k=1}^{E} \sum\limits_{m=1}^{E} p_{k}^{j}\,p_{m}^{\ell} \cdot \mathbf{1}[k \in T^{j}(z^{j})]
\cdot \mathbf{1}[m \in T^{\ell}(z^{\ell})] \\
&\qquad\qquad \cdot\, g_{k}^{j}(z^{j})\,g_{m}^{\ell}(z^{\ell})
\,\Vert\Phi_{L,j}\,f_{k}^{j}(z^{j})\Vert_{2}\,\Vert\Phi_{L,\ell}\,f_{m}^{\ell}(z^{\ell})\Vert_{2}
\end{align*}

Taking the empirical expectation over the calibration set (using conditional normalization as in per-layer HOPE):
\[
\mathop{\mathbb{E}}\limits_{x}[Z] = \sum\limits_{j=1}^{L} \sum\limits_{\ell=1}^{L} \sum\limits_{k=1}^{E} \sum\limits_{m=1}^{E} p_{k}^{j}\,p_{m}^{\ell} \cdot G[(j,k),(\ell,m)] = p^{\top} G\, p
\]
where $p \in \{0,1\}^{LE}$ is the concatenated binary vector over all layers, and $G \in \mathbb{R}^{LE \times LE}$ is defined as:
\[
G[(j,k),(\ell,m)] = \frac{1}{\vert\mathcal{X}_{k,m}^{j,\ell}\vert} \sum\limits_{x \in \mathcal{X}_{k,m}^{j,\ell}} g_{k}^{j}(z^{j})\,g_{m}^{\ell}(z^{\ell})\,\Vert\Phi_{L,j}\,f_{k}^{j}(z^{j})\Vert_{2}\,\Vert\Phi_{L,\ell}\,f_{m}^{\ell}(z^{\ell})\Vert_{2}
\]
with $\mathcal{X}_{k,m}^{j,\ell} = \{x : k \in T^{j}(z^{j}) \text{ and } m \in T^{\ell}(z^{\ell})\}$ denoting the set of tokens which activate expert $k$ in layer $j$ and expert $m$ in layer $\ell$.

\begin{theorem}[Cross-layer QP]
Given a total pruning budget $B$ (the total number of experts to remove across all layers), the collection of prune-sets $\{P^{1},\ldots,P^{L}\}$ which minimizes $\mathop{\mathbb{E}}\limits_{x \in \mathcal{D}}[Z]$ is the solution to:
\[
p^{*} = \argmin_{p}\; p^{\top} G\, p \quad \text{s.t.} \quad p \in \{0,1\}^{LE},\; \sum\limits_{k=1}^{LE} p_{k} = B
\]
\label{thm:chope_qp}
\end{theorem}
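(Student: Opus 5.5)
The plan mirrors the proof of Theorem~\ref{thm:qp}, lifted to the concatenated index set $\{(j,k): 1\le j\le L,\ 1\le k\le E\}$.

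First, rewrite $Z$ using binary indicators. For each token, the inner set $T^{j}(z^{j})\cap P^{j}$ is exactly the set of $k$ with $p_{k}^{j}=1$ and $\mathbf{1}[k\in T^{j}(z^{j})]=1$. So
\[
\sqrt{Z}=\sum_{j,k} p_{k}^{j}\, a_{k}^{j}(x), \qquad a_{k}^{j}(x)=\mathbf{1}[k\in T^{j}(z^{j})]\, g_{k}^{j}(z^{j})\,\Vert\Phi_{L,j} f_{k}^{j}(z^{j})\Vert_{2}.
\]
Expanding the square then gives the quadruple sum already displayed. Here it is essential that $\Phi_{L,j}$ and all routing quantities are evaluated on the clean forward pass (the decoupling approximation $\Delta^{j}(\tilde z^{j})\approx\Delta^{j}(z^{j})$). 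This makes $a_{k}^{j}(x)$ independent of $p$, so $Z$ is a genuine quadratic form $p^{\top}a(x)a(x)^{\top}p$ in $p$.

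Second, take the empirical average over the $N$ calibration tokens and exchange the finite sums. The coefficient of $p_{k}^{j}p_{m}^{\ell}$ is $\frac1N\sum_{x\in\mathcal{X}^{j,\ell}_{k,m}} g_{k}^{j}g_{m}^{\ell}\Vert\Phi_{L,j}f_{k}^{j}\Vert_{2}\Vert\Phi_{L,\ell}f_{m}^{\ell}\Vert_{2}$, since the product of indicators restricts the sum to $\mathcal{X}^{j,\ell}_{k,m}$. This yields $\mathbb{E}[Z]=p^{\top}Gp$ exactly with the unconditional normalization $1/N$. The conditional normalization $1/\vert\mathcal{X}^{j,\ell}_{k,m}\vert$ is a practical reweighting, exactly as in the per-layer case. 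So the theorem should be stated, as in Theorem~\ref{thm:qp}, for the $1/N$ version, with the conditional $G$ noted as the surrogate used in practice. Empty $\mathcal{X}$ gives a zero entry. $G$ is symmetric, since swapping $(j,k)$ and $(\ell,m)$ leaves the summand unchanged.

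Third, identify the feasible sets. Collections $\{P^{1},\dots,P^{L}\}$ with $\sum_j\vert P^{j}\vert=B$ are in bijection with $p\in\{0,1\}^{LE}$ having $\sum p=B$. Since the objectives agree pointwise under this bijection, their argmins coincide, which proves the claim. Any per-layer constraints, such as never pruning shared experts or capping $\vert P^{j}\vert<E$, would be added as extra linear constraints without changing the argument.

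The main obstacle is not the algebra, which is routine, but making precise in what sense ``minimizes $\mathbb{E}[Z]$'' holds. Two points need care. First, the identity is exact only under the $1/N$ normalization, so the conditional-normalization $G$ must be treated as a modeling choice rather than as part of the equivalence. Second, the clean-input approximation is what keeps $G$ independent of $p$; without it, $\Phi_{L,j}$ and $T^{j}$ would depend on earlier prune-sets and the objective would not be quadratic. I would state both explicitly as standing assumptions before the computation.
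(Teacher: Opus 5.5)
Your proposal is correct and follows the paper's argument: encode $T^{j}(z^{j})\cap P^{j}$ via $p_{k}^{j}\,\mathbf{1}[k\in T^{j}(z^{j})]$, expand the square into the quadruple sum, and exchange it with the empirical average to read off $G$, with feasible binary vectors $p$ in bijection with budget-$B$ collections of prune-sets. Your normalization caveat is well taken: the paper writes $\mathbb{E}_{x}[Z]=p^{\top}Gp$ directly with the conditional weights $1/\vert\mathcal{X}^{j,\ell}_{k,m}\vert$, but that identity is exact only for the $1/N$ version, as in Theorem~\ref{thm:qp}; the clean-input point, by contrast, is already built into the definition of $Z$, so it need not be stated as a separate assumption for this theorem.
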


\subsection{Relationship to per-layer HOPE}

The cross-layer matrix $G$ encodes both within-layer and across-layer expert interactions:
\begin{itemize}[leftmargin=1em]
    \item $E\times E$ blocks on the diagonal of $G$ (entries where $j = \ell$) correspond to the per-layer $F$-matrices of HOPE, scaled by $\Vert\Phi_{L,j}\Vert$
    \item The off-diagonal blocks (entries where $j \neq \ell$) encode \textit{cross-layer} interactions: the joint penalty of pruning expert $k$ in layer $j$ and expert $m$ in layer $\ell$ simultaneously
\end{itemize}

CHOPE strictly generalizes per-layer HOPE. Setting $\Phi_{L,j} = I$ for all $j$ and zeroing the off-diagonal blocks of $G$ (i.e. $G[(j,k),(\ell,m)] = 0$ for $j \neq \ell$) decouples layers entirely, recovering $L$ independent per-layer HOPE problems (Theorem~\ref{thm:qp}). Further zeroing the within-layer off-diagonal entries effectively recovers REAP.

Unlike per-layer HOPE (which enforces a fixed budget per layer), CHOPE optimizes a single global budget $B = \sum\limits_{\ell} \vert P^{\ell}\vert$ across all layers. The optimization allocates pruning non-uniformly, potentially pruning more aggressively in layers with weaker cooperative structure or lower propagation impact.

\subsection{Practical considerations}

\textbf{Jacobian approximation}: Computing the full $d \times d$ Jacobians $J_{A}^{\ell}$ and $J_{M}^{\ell}$ at every layer is prohibitively expensive for large models. A practical simplification is to set $\Phi_{L,j} = I$ for all $j$, corresponding to the assumption that errors propagate through the residual stream without transformation (i.e. Jacobians are approximately zero). For residual-stream architectures where the skip connection dominates, this is a reasonable approximation. Under this simplification, $G$ retains cross-layer interaction terms but does not differentially weight layers based on error propagation.

\textbf{Per-layer minimum constraints}: In practice, optimizing a global budget without constraints can lead to instability (e.g. over-pruning early layers). We propose adding per-layer minimum constraints to ensure at least $M$ experts (the routing width) survive in every layer:
\[
p^{*} = \argmin_{p}\; p^{\top} G\, p \quad \text{s.t.} \quad p \in \{0,1\}^{LE},\; \sum\limits_{k=1}^{LE} p_{k} = B,\; \sum\limits_{k=1}^{E} (1 - p_{k}^{j}) \geq M \;\;\forall\, j
\]

\textbf{Scalability}: The matrix $G$ has dimension $LE \times LE$. While larger than the per-layer $E \times E$ matrix, this remains tractable for modern QP solvers.

\subsection{Preliminary results}

Here, we present some preliminary results comparing the performance of CHOPE to layer-wise HOPE.

\begin{table}[h]
\centering
\begin{tabular}{lrcc}
\toprule
Method & Rate & Tulu Mean & SWE-Bench Pro \\
\midrule
Base & & 48.87 & 41.82 \\
\midrule
HOPE & 10\% & \textbf{48.49{\tiny$\pm$0.53}} & \textbf{43.18{\tiny$\pm$1.40}} \\
CHOPE & 10\% & 46.17{\tiny$\pm$0.72} & 40.87{\tiny$\pm$1.41} \\
\addlinespace
HOPE & 25\% & \textbf{48.51{\tiny$\pm$0.61}} & \textbf{44.58{\tiny$\pm$1.42}} \\
CHOPE & 25\% & 42.36{\tiny$\pm$1.08} & 27.29{\tiny$\pm$17.51} \\
\addlinespace
HOPE & 50\% & 42.93{\tiny$\pm$1.44} & \textbf{42.33{\tiny$\pm$1.23}} \\
CHOPE & 50\% & \textbf{48.12{\tiny$\pm$2.44}} & 22.95{\tiny$\pm$1.19} \\
\bottomrule
\end{tabular}
\caption{HOPE vs cross-layer HOPE (CHOPE) for Qwen3.5-35B-A3B (calib: Evol-CA). Scores $\times 100$ (mean and standard deviation across 3 independent calibration trials). The better method for each column is bolded.}
\label{tab:chope_qwen35_35b}
\end{table}

\begin{table}[h]
\centering
\begin{tabular}{lrcc}
\toprule
Method & Rate & Tulu Mean & SWE-Bench Pro \\
\midrule
Base & & 39.34 & 50.16 \\
\midrule
HOPE & 10\% & \textbf{40.14{\tiny$\pm$0.41}} & \textbf{52.44{\tiny$\pm$0.68}} \\
CHOPE & 10\% & 39.45{\tiny$\pm$0.76} & 46.04{\tiny$\pm$0.32} \\
\addlinespace
HOPE & 25\% & \textbf{39.59{\tiny$\pm$0.48}} & \textbf{52.29{\tiny$\pm$0.54}} \\
CHOPE & 25\% & 38.72{\tiny$\pm$1.06} & 47.10{\tiny$\pm$0.46} \\
\addlinespace
HOPE & 50\% & \textbf{43.75{\tiny$\pm$1.62}} & \textbf{43.84{\tiny$\pm$1.62}} \\
CHOPE & 50\% & 40.30{\tiny$\pm$0.88} & 42.40{\tiny$\pm$0.11} \\
\bottomrule
\end{tabular}
\caption{HOPE vs cross-layer HOPE (CHOPE) for Qwen3.5-122B-A10B (calib: Evol-CA). Scores $\times 100$ (mean and standard deviation across 3 independent calibration trials). The better method for each column is bolded.}
\label{tab:chope_qwen35_122b}
\end{table}

\begin{figure}[h]
\centering
\includegraphics[width=\columnwidth]{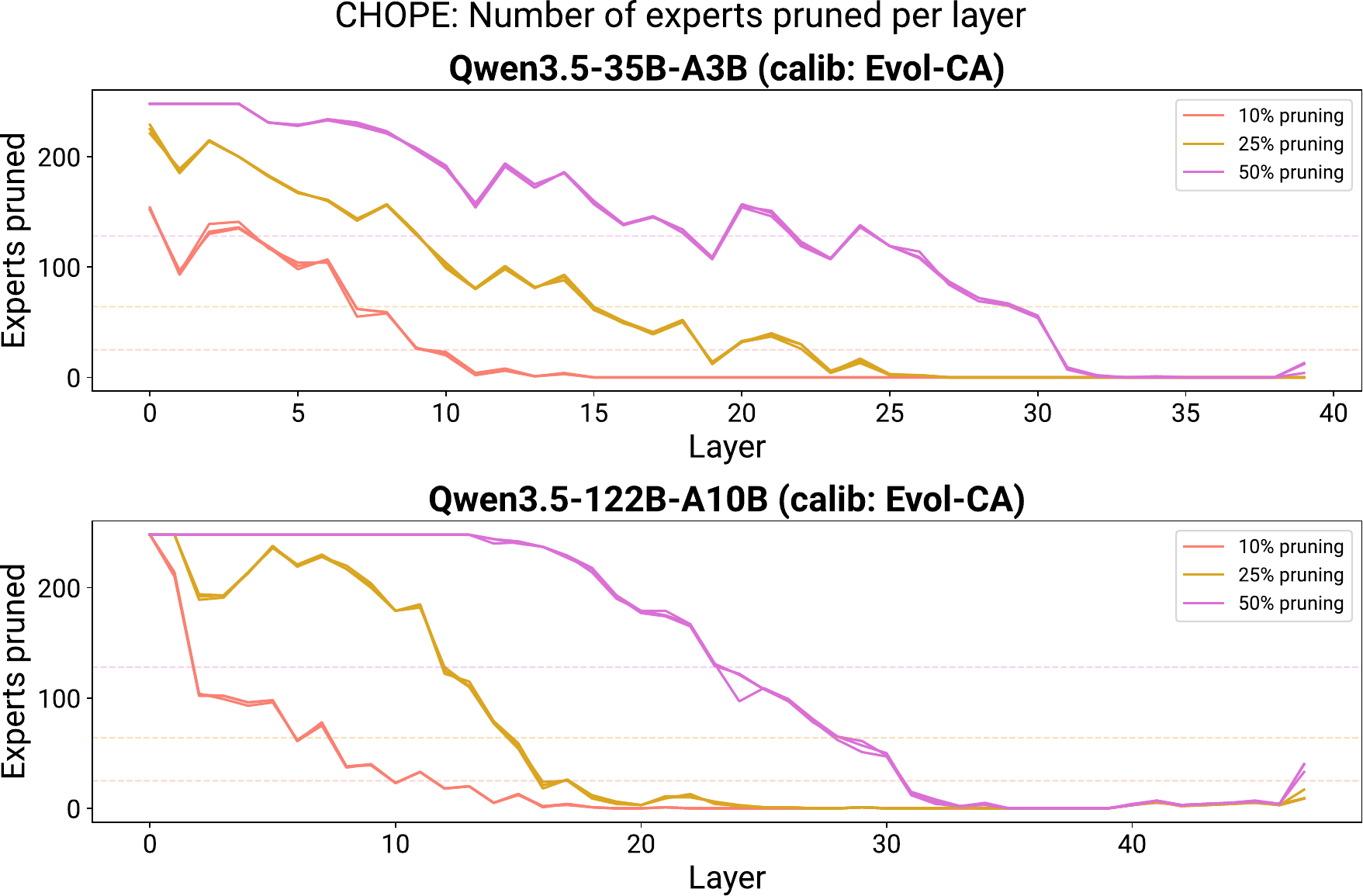}
\caption{\small Non-uniform expert pruning allocation by CHOPE (cross-layer HOPE). Each line indicates the number of experts pruned per layer (3 independent trials for each pruning rate); dashed horizontal lines indicate the uniform budget (used by per-layer HOPE).}
\label{fig:S7chope_pruned_per_layer}
\end{figure}

We implemented CHOPE and computed non-uniform prune-sets for Qwen3.5-35B-A3B and Qwen3.5-122B-A10B. For simplicity, we assume that Jacobians are $I$ (as described above), and do not limit the number of experts pruned per layer.

CHOPE's non-uniform allocation is consistent across trials, and in many conditions, CHOPE performed reasonably well, although not as well as per-layer HOPE (Tables~\ref{tab:chope_qwen35_35b}--\ref{tab:chope_qwen35_122b}). CHOPE-pruned models also occasionally experienced performance collapse.

CHOPE tended to prune a large number of experts in the earliest layers of the network (Figure~\ref{fig:S7chope_pruned_per_layer}), suggesting that the cross-layer objective's pruning allocation can introduce instability. Although promising, limiting the number of experts pruned (particularly in early layers) could help CHOPE perform better and avoid performance collapse. We leave further exploration of CHOPE for future work.

\clearpage

\section{Methods}
\label{app:methods}

\subsection{Calibration}

Expert usage statistics for HOPE were collected via a single forward pass over the calibration set. We hooked into each MoE layer's expert module and---for each token---recorded: 1) which experts were in the top-$K$ selected set, 2) their softmax-normalized gate weights, and 3) their output vectors. From these, we accumulated per-layer statistics in an $E \times E$ matrix: the total gate-weighted norm products $\sum\limits_{x:x\in\mathcal{X}_{i,j}}g_{i}(x) g_{j}(x) \Vert f_{i}(x)\Vert\Vert f_{j}(x)\Vert$ for all co-selected expert pairs $i,j$, along with co-selection counts $\vert\mathcal{X_{i,j}}\vert$.

We used two calibration sets:
\begin{itemize}[leftmargin=1em]
    \item \textbf{Evol-CodeAlpaca-v1} \citep{wizardcoder2023}: A coding-focused instruction dataset. We used all 111k prompts on the unpruned model.
    \item \textbf{SWE-bench Verified trajectories} \citep{swebench2024}: Agentic multi-turn coding trajectories collected from the unpruned model solving SWE-Bench Verified instances (500 instances). Full trajectories (including tool calls and observations) were reconstructed and tokenized.
\end{itemize}

For each calibration set (and each model), we ran 3 independent trials using the same prompts but different random seeds for model generation, yielding different routing patterns and thus different $F$ matrices per trial. For SWE-bench Verified, we truncated any traces longer than 100k tokens (rare).

\subsection{Quadratic-program solver}

Given the $E \times E$ interaction matrix $F$ for a layer, we solved the binary QP $\min_p p^\top F p$ subject to $p \in \{0,1\}^E$ and $\sum_k p_k = \vert P\vert$ via continuous relaxation. We relaxed $p \in \{0,1\}^E$ to $p \in [0,1]^E$ and solved with SciPy's SLSQP optimizer, using bounds $[0,1]$ per variable and a hard equality constraint on the sum. The initial point was set to $p_k = \frac{\vert P\vert}{E}$ (uniform). We used a tolerance of $10^{-12}$ and a maximum of 1000 iterations. The continuous solution was then rounded to binary by selecting the $\vert P\vert$ entries with the highest values.

For first-order baselines (REAP, MAN, EAN, Frequency), we computed their respective scalar scores using the same calibration data and collected the statistics as described in their respective publications, and sorted to select the $|P|$ experts with the lowest scores.

\subsection{Model pruning}

Pruned model checkpoints were created by physically removing expert parameters. For each pruned expert, we deleted its MLP weights (\texttt{gate\_up\_proj} and \texttt{down\_proj}) and sliced out the corresponding rows from the router's weight matrix. This reduced the model's parameter count and file size proportionally. The pruned model was saved as a standard HuggingFace checkpoint and loaded directly for inference.

\subsection{Tulu3 evaluation}

Pruned models were evaluated on the Tulu3-Dev benchmark suite using the OLMo Evaluation Suite (olmes) \citep{tulu3_2024}. We evaluated on 8 tasks: GSM8K, MATH, IFEval, MMLU, BBH, TruthfulQA, PopQA, and LiveCodeBench. We dropped Codex from the suite due to insufficient variation between models (LiveCodeBench is a more realistic set of coding tasks). Models were served via vLLM with a maximum sequence length of 8192 tokens and a generation budget of 2048 tokens per instance for computational efficiency. These are the same parameters we used for Evol-CodeAlpaca calibration on the unpruned models.

\subsection{SWE-Bench Pro evaluation}

Agentic coding evaluation was conducted on SWE-Bench Pro \citep{deng2025swebenchpro}, a benchmark of 731 enterprise-level software engineering instances. We used the Harbor evaluation framework with the OpenHands SDK agent (\texttt{openhands-sdk-bench}). The pruned model was served via vLLM and accessed by the agent as a hosted endpoint. We used the following evaluation parameters: \texttt{n\_attempts=1}, \texttt{max\_input\_tokens=253952}, \texttt{max\_output\_tokens=8192}, temperature 0.6, top-$p$ 0.95. The metric reported is the mean reward (resolve rate) across all 731 instances. These are the same parameters we used for SWE-Bench verified calibration on the unpruned models.

\subsection{Supervised fine-tuning}

Post-pruning SFT was conducted on Qwen3.5-122B-A10B pruned checkpoints (HOPE and REAP, at 50\% pruning rate, Evol-CodeAlpaca calibration). We fine-tuned on LiveCodeBench traces collected from the unpruned model, tokenized with sequence parallelism (8-way). Training used DeepSpeed ZeRO Stage 3 with full parameter fine-tuning (not LoRA), BF16 mixed precision, and Flash Attention 2. We used the following learning hyperparameters: learning rate $10^{-6}$, constant schedule with 20-step warmup, max gradient norm 1.0, global batch size $\sim$2M tokens/step (per-device batch size 1, gradient accumulation every 2 steps, sequence parallel size 8), trained for 500 steps ($\sim$1B tokens total). Pre-SFT and post-SFT evaluations used matched calibration trials for fair comparison.

\end{document}